\setlist[enumerate,1]{label={(\roman*)}} %
\newcommand{\myitem}[1]{%
	\item[#1]\protected@edef\@currentlabel{#1} %
}
\theoremstyle{plain}
\newtheorem{theorem}{Theorem}[section]
\newaliascnt{propCt}{theorem}
\newaliascnt{lemmaCt}{theorem}
\newtheorem{lemma}[lemmaCt]{Lemma}
\newaliascnt{corCt}{theorem}
\newtheorem{cor}[corCt]{Corollary}
\newaliascnt{extCt}{theorem}
\newaliascnt{algoCt}{theorem}
\newaliascnt{notaCt}{theorem}
\newaliascnt{conjCt}{theorem}
\newaliascnt{introdefCt}{introthm}
\theoremstyle{definition}
\newaliascnt{defiCt}{theorem}
\newtheorem{defi}[defiCt]{Definition}
\newaliascnt{remCt}{theorem}
\newtheorem{rem}[remCt]{Remark}
\newaliascnt{exaCt}{theorem}
\newtheorem{exa}[exaCt]{Example}
\newaliascnt{assuCt}{theorem}
\newcommand{\N}{\mathbb N}
\newcommand{\R}{\mathbb R}
\newcommand{\Rpfaff}{\mathbb R_{\mathrm{Pfaff}}}
\newcommand{\id}{\mathrm{id}}
\title{SAD Neural Networks: Divergent Gradient Flows and %
	Asymptotic Optimality via o-minimal Structures}
\author{%
	Julian Kranz$^{a,b}$
	\qquad
	Davide Gallon$^{c,d}$
	\qquad
	Steffen Dereich$^{e}$
	\qquad 
	Arnulf Jentzen$^{f,g}$
	\\[1em]
	\footnotesize$^{a}$Department of Information Systems, \\
	\footnotesize University of M\"unster, Germany, email: \texttt{\href{mailto:julian.kranz@uni-muenster.de}{julian.kranz@uni-muenster.de}}\\[.2em]
	\footnotesize$^{b}$Applied Mathematics: Institute for Analysis and Numerics,\\
	\footnotesize University of M\"unster, Germany, email: \texttt{\href{mailto:julian.kranz@uni-muenster.de}{julian.kranz@uni-muenster.de}}\\[.2em]
	\footnotesize$^{c}$Applied Mathematics: Institute for Analysis and Numerics, \\
	\footnotesize University of M\"unster, Germany, email: \texttt{\href{mailto:davide.gallon@uni-muenster.de}{davide.gallon@uni-muenster.de}}\\[.2em]
	\footnotesize$^{d}$RiskLab Switzerland, ETH Z\"urich, Switzerland, email: \texttt{\href{mailto:dgallon@ethz.ch}{dgallon@ethz.ch}}\\[.2em]
	\footnotesize$^{e}$Applied Mathematics: Institute for Mathematical Stochastics, \\
	\footnotesize University of M\"unster, Germany, email: \texttt{\href{mailto:steffen.dereich@uni-muenster.de}{steffen.dereich@uni-muenster.de}}\\[.2em]
	\footnotesize$^{f}$School of Data Science and School of Artificial Intelligence, The Chinese University\\
	\footnotesize of Hong Kong, Shenzhen (CUHK-Shenzhen), China, email: \texttt{\href{mailto:ajentzen@cuhk.edu.cn}{ajentzen@cuhk.edu.cn}}\\[.2em]
	\footnotesize$^{g}$Applied Mathematics: Institute for Analysis and Numerics, \\
	\footnotesize University of M\"unster, Germany, email: \texttt{\href{mailto:ajentzen@uni-muenster.de}{ajentzen@uni-muenster.de}}
	}
\newcommand{\githublink}{\href{https://github.com/deeplearningmethods/sad}{https://github.com/deeplearningmethods/sad}}
\begin{document}
	\maketitle
	\begin{abstract}
		We study gradient flows for loss landscapes of fully connected feedforward neural networks with commonly used continuously differentiable activation functions such as the logistic, hyperbolic tangent, softplus or GELU function. 
		We prove that the gradient flow either converges to a critical point or diverges to infinity while the loss converges to an asymptotic critical value.
		Moreover, we prove the existence of a threshold $\varepsilon>0$ such that the loss value of any gradient flow initialized at most $\varepsilon$ above the optimal level converges to it. 
		For polynomial target functions and sufficiently big architecture and data set, we prove that the optimal loss value is zero and can only be realized asymptotically. 
		From this setting, we deduce our main result that any gradient flow with sufficiently good initialization diverges to infinity.  
		Our proof heavily relies on the geometry of o-minimal structures.
		We confirm these theoretical findings with numerical experiments and extend our investigation to more realistic scenarios, where we observe an analogous behavior.
	\end{abstract}
	
	\section{Introduction}
The success of deep learning in practice has far outpaced our theoretical understanding of neural network training dynamics. While gradient-based optimization methods -- such as (stochastic) gradient descent and its variants \cite{Robbins1951,Polyak1964,Hinton,Duchi2011,Kingma2014} -- routinely achieve near-zero training loss on complex tasks, the theoretical principles governing their convergence remain poorly understood, particularly in realistic, non-convex settings. 
Even for the mathematically more tractable gradient flow, which can be viewed as the continuous time limit of gradient descent, a fundamental question persists: 
\begin{quote}
	Under what conditions does the gradient flow converge to a good local minimum?\footnote{Note that one cannot always expect convergence to \emph{global} minima \cite{Jentzen2025,Do2025,Do2025a}, but rather \emph{local} minima with low risk level \cite{Ibragimov2025,Gentile2022, Gentile2024}.}
\end{quote}
Recent years have seen significant progress in the theory of deep learning, with convergence analyses often relying on strong assumptions, such as convexity \cite{EonBottou1998,Dereich2024}, overparameterization \cite{Du2019,AllenZhu2019}, specific initialization schemes \cite{Jacot2018}, or {\L}ojasiewicz inequalities \cite{Dereich2021,Dereich2022,Absil2005}. 
Moreover, most of these works either assume or imply that the iterates of the optimization scheme or the trajectory of the gradient flow stay within a bounded region. At the same time, an increasing body of work shows that this assumption is not always satisfied, with the simplest example being a linear classifier with logistic activation \cite{Petersen2021,Lyu2020,Vardi2022,Gallon2022}. 
\subsection{Contributions}
In this work, we bridge the gap outlined above by observing a general \textbf{dichotomy}: 
For a large class of smooth activation functions (such as the logistic, hyperbolic tangent, softplus or GELU function \cite{Hendrycks2023} which has been used in BERT \cite{Devlin2019} and GPT-3 \cite{Brown2020}) and loss functions (such as mean squared error or binary cross entropy), the gradient flow either 
\begin{itemize}
	\item converges to a critical point, or
	\item diverges to infinity with the loss converging to a \emph{generalized critical value} \cite{Rabier1997}.
\end{itemize} 
Using finiteness of the set of generalized critical values \cite{dAcunto2000}, we deduce the existence of a threshold $\varepsilon>0$ such that any gradient flow initialized within $\varepsilon$-distance to the optimal level necessarily converges to it.
We refer to \autoref{thmA} for a mathematically rigorous statement of our dichotomy result. 
We emphasize that convergence of gradient methods to critical points under boundedness assumptions is a well-understood phenomenon \cite{Bolte2007, Attouch2010, Attouch2013, Davis2020, Dereich2021, Bianchi2022, Josz2023, Hu2024}. Our main goal in stating the above dichotomy is to make the results from \cite{dAcunto2000} accessible to the deep learning community as the existence of the threshold $\varepsilon>0$ seems to be unnoticed in the deep learning literature.

As a concrete application of this dichotomy, we complement the divergence results obtained in \cite{Petersen2021,Lyu2020,Vardi2022,Gallon2022} by a general \textbf{divergence result for polynomial target functions}:
For a polynomial target function $p\colon \R^n\to \R^m$ of degree at least two, a sufficiently big neural network architecture and a sufficiently big training dataset, we prove that the loss has no global minimum, but infimum zero.
In particular, the optimal neural network parameters can only be achieved asymptotically. 
Thus, by the dichotomy result, the gradient flow diverges whenever the parameters are initialized at a loss level below $\varepsilon$. 
We refer to \autoref{thmB}, \autoref{thmC} and \autoref{corPoly} for a mathematically rigorous statement of our divergence result.
The ``infimum zero'' part of our result is essentially well known and a standard ingredient in proofs of the universal approximation theorem \cite{Pinkus1999, Bolcskei2019, DeVore2021, DeRyck2021, Petersen2021}. Our novel contribution here is the ``no global minimum'' part and the resulting divergence phenomenon for the gradient flow. 

Our analysis applies to fully connected feedforward deep neural networks with multiple input and output dimensions and a general class of activation functions including the logistic, hyperbolic tangent, softplus and GELU function. 
Moreover, we can treat both the average loss on a finite dataset and the true loss with respect to a sufficiently smooth probability density. 
This level of generality arises because our analysis builds on bootstrapping abstract properties rather than concrete computations:
\begin{itemize}
	\item The functions appearing in our analysis are \textbf{definable} in an o-minimal structure in the sense of mathematical model theory \cite{VandenDries1998}, which intuitively means that they can be defined using first order statements involving well-behaved operations such as $+,-,\cdot,\div$, exponentials, logarithms, derivatives and anti-derivatives. Definable functions have strong rigidity and finiteness properties such as the uniform finiteness theorem \cite{Coste2000}, and definable dimension theory \cite{VandenDries1998}. Our proofs use these techniques in a central and novel way. 
	\item We isolate the general class of \emph{Sublinear Analytic Definable} (\textbf{SAD}) functions (see \autoref{defSAD} and \autoref{egSAD}) for which our results hold. SAD functions have good permanence properties ensuring that neural networks with SAD activation function are themselves SAD. The main idea of proving the absence of global minima in our divergence result is that the sublinearity of a SAD neural network prevents it from being globally identical to a polynomial of degree at least two, while the analyticity and definability ensure enough rigidity to detect this property on a finite dataset (or compactly supported distribution). On the other hand, analyticity guarantees a sufficient supply of non-trivial Taylor coefficients, allowing polynomials to be approximated arbitrarily well. 
	We believe that our definition of SAD activation functions provides a convenient theoretical framework for future research on smooth neural networks. 
\end{itemize}
Building on the theoretical framework established earlier, we validate our findings by training neural networks on polynomial target functions using optimization methods such as gradient descent and Adam \cite{Kingma2014}. 
The results closely align with the behavior predicted by our gradient flow analysis. 
We then extend our investigation to more complex and more practically relevant scenarios: numerical solution of PDEs using the Deep Kolmogorov Method \cite{Beck2021} and image classification of the MNIST dataset \cite{LeCun1998}. Across all these experiments, we consistently observe a growth in the norm of the neural network parameters as the loss diminishes. These findings suggest that the divergence phenomenon we analyze actually holds in much broader generality.
We include the source code for our numerical simulations at \githublink.
\todo{We also need to create an anonymous repository for the submission via \url{https://anonymous.4open.science/}}

The paper is structured as follows: We end this introduction by briefly summarizing related work and limitations of ours below.
We introduce the necessary mathematical background and formally state the dichotomy result in \autoref{sec-Dichotomy}. In \autoref{sec-polynomial}, we develop the divergence result for polynomial target functions. Our numerical experiments are reviewed in \autoref{sec-numerical}. We conclude in \autoref{sec-conclusion}. The proofs of our main results are given in the \hyperref[appendix]{Appendix}.
\subsection{Related work}\label{subsec-related}
The general theory of o-minimal structures originates from \cite{Pillay1986,Knight1986,Pillay1988} and is introduced accessibly in \cite{VandenDries1998,Coste2000,Wilkie2007}. We specifically use o-minimality of Pfaffian functions \cite{Wilkie1999}, Kurdyka's {\L}ojasiewicz inequality \cite{Kurdyka1998}, definable dimension theory \cite{VandenDries1998}, the uniform finiteness theorem \cite{Coste2000} and finiteness of the set of generalized critical values \cite{dAcunto2000}.
Applications of o-minimal structures to optimization are surveyed in \cite{Ioffe2009}. There is an increasing body of work on the application of o-minimal structures to convergence results for gradient flows and stochastic gradient descent (SGD) under stability assumptions (assuming that the parameters stay within a bounded region) \cite{Bolte2007,Attouch2010,Attouch2013,Davis2020,Dereich2021,Bianchi2022,Josz2023,Hu2024}. 
Conditions ensuring stability are given in \cite{Josz2024,Lai2024}. 
In the unstable regime, there are divergence results which consider homogeneous networks \cite{Lyu2020,Vardi2022}, very specific regression problems \cite{Gallon2022}, or use convergence of the loss and absence of global minimizers as an assumption \cite{Petersen2021}. 
Other applications of o-minimality to neural networks are given in \cite{Karpinski1995,Karpinski1997,Josz2023a}. 
Neural network approximation results such as the universal approximation theorem \cite{Cybenko1989, Funahashi1989, Hornik1989, Leshno1993} are surveyed in \cite{DeVore2021, berner2021modern}. Our \autoref{thmC} can be extracted from existing literature. The proof in \cite{DeRyck2021} for $\tanh$ for instance generalizes to our setting. 
Further neural network approximation results are given in \cite{Yarotsky2017, Petersen2018, Bolcskei2019,Daws2019,Opschoor2020, Petersen2021}. 
\subsection{Limitations and outlook}
While we prove powerful results for gradient flows, our methods are currently not capable of proving analogous results for (stochastic) gradient descent. Although results exist under stability assumptions (see \autoref{subsec-related} above), the main limitation in the unstable regime is our lack of control over the Hessian. To mitigate this, one could try to prove that the loss function satisfies a generalized smoothness condition in the sense of \cite{Zhang2020,Li2023} (at least along the gradient trajectory) and use this to employ a non-convex convergence proof for SGD. In light of the empirical evidence provided in \cite{Zhang2020} this approach seems quite promising and will thus be pursued in future work.

Although we do provide empirical evidence for similar results for gradient descent and other optimizers in \autoref{sec-numerical}, our numerical results are fairly limited by the variety and size of the considered datasets. Since the main focus of our paper are the mathematical theorems, these experiments are intended as a proof of concept rather than an in-depth experimental analysis.

Another limitation of our analysis is that the considered activation functions are required to be at least $C^1$ or, in some cases, analytic. In particular, our analysis does not apply to the ReLU activation function. In fact our divergence result for polynomial target functions does \textbf{not} hold for the ReLU function since in this case, global minima exist for shallow networks \cite{Jentzen2022,Dereich2024a,Dereich2024b}. 
Moreover, global minima always exist if one adds an $L^2$-regularization term to the loss function. 
For the same reason, divergence does not occur if one uses an optimizer with weight decay such as AdamW \cite{Loshchilov}.  

In several neural network training scenarios with diverging model parameters, one can prove that the {normalized} parameters converge, see \cite{Ji2020,Vardi2022,Kumar2024}. This phenomenon known as \emph{directional convergence} is closely related to the \emph{gradient conjecture at infinity} \cite{Daniilidis2022} and can be deduced for $C^2$-functions definable in a \emph{polynomially bounded} o-minimal structure as a consequence of \cite{Kurdyka2006}.\footnote{We would like to thank Vincent Grandjean for pointing this out.} However, the case of o-minimal structures containing the exponential function which is relevant to our setting is still open \cite{Daniilidis2022} and remains to be explored.

	\section{A dichotomy for definable gradient flows}\label{sec-Dichotomy}
In this section, we present our first result (see \autoref{thmA} below) which states that for arbitrary deep neural networks and most of the commonly used loss functions and activation functions (see \autoref{exa-C1def-function} below), any gradient flow associated to the training loss landscape either converges to a critical point or diverges to a \emph{generalized critical value} at infinity. 
The main conceptual reason for this phenomenon is that all the occuring functions are definable in an o-minimal structure in the sense of model theory. We recall the necessary mathematical background below:
\begin{defi}[O-minimal structure]\label{defi-ominimal}
	A \emph{structure} $\mathcal S$ (expanding the real closed field $\R$) is a collection of subsets $\mathcal S_n\subseteq \mathcal P(\R^n)$ for all $n\in \N$ satisfying the following properties: 
	\begin{enumerate}
		\item For $A,B\in \mathcal S_n$, we have $A\cup B, \,A\cap B,\, \R^n\setminus A \in \mathcal S^n$.
		\item For $A\in \mathcal S_n$ and $B\in \mathcal S_m$, we have $A\times B\in \mathcal S^{n+m}$. 
		\item For $A\in \mathcal S^n$ and $\pi \colon \R^n\to \R^m$ a coordinate projection with $m\leq n$, we have $\pi(A)\in \mathcal S_m$.
		\item For each polynomial $p\in \R[X_1,\dotsc,X_n]$, we have $\{x\in \R^n\mid p(x)>0\}\in \mathcal S^n$ and $\{x\in \R^n \mid p(x)=0\} \in\mathcal S_n$. 
	\end{enumerate}
	A structure $\mathcal S$ is called \emph{o-minimal} if all the elements of $\mathcal S_1$ are finite unions of points and intervals.
\end{defi}
If $\mathcal S$ is an o-minimal structure, we call the sets $A\in \mathcal S_n$ the \emph{definable} sets. A function $f\colon \R^n \to \R^m$ is called \emph{definable}, if its graph $\{(x,f(x))\mid x\in \R^n\}$ is definable. If no o-minimal structure is specified, we refer to a set or function as \emph{definable} if it is definable in some o-minimal structure. 
An instructive (and in fact universal) example of a structure is the one \emph{generated} by a set of functions $\{f_i\colon \R^{n_i}\to \R\mid i\in I\}$ which precisely consists of all sets that can be defined using first order statements involving real numbers and the symbols $+,-,\cdot,\div,<,(f_i)_{i\in I}$. 

Standard examples of o-minimal structures are the structure of semialgebraic sets (generated by polynomials), the structure of subanalytic sets (generated by restricted analytic functions) \cite{Gabrielov1968}, the structure generated by the exponential function \cite{Wilkie1996}, and the combination of the latter two \cite{Dries1994}. 
The popular GELU activation function \cite{Hendrycks2023} is definable in none of these structures \cite[Theorem 5.11]{VanDenDries1997}, but it is definable in the structure generated by \emph{Pfaffian functions}: 

\begin{defi}[Pfaffian functions]\label{defi-pfaffian}
	A $C^1$-function $f\colon \R^n\to \R$ is called \emph{Pfaffian} if there exist $C^1$-functions $f_1,\dotsc,f_r\colon \R^n\to \R$ with $f_r=f$ such that for any $i=1,\dotsc,r$ and $j=1,\dotsc,n$, the partial derivative $\frac{\partial f_i}{\partial x_j}(x)$ is a polynomial in $x,f_1(x),\dotsc,f_i(x)$. 
\end{defi}
\begin{theorem}[\cite{Wilkie1999}]
	The structure $\Rpfaff$ generated by all Pfaffian functions is o-minimal.
\end{theorem}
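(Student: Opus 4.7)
The plan is to combine a quantitative geometric finiteness theorem for Pfaffian sets, originally due to Khovanskii, with a model-completeness statement in the style of Wilkie's treatment of the real exponential field, in order to reduce o-minimality to a finiteness statement about connected components of projections of well-controlled sets. The closure axioms (i)--(iv) of \autoref{defi-ominimal} hold essentially tautologically from the very definition of $\Rpfaff$ as the structure generated by the Pfaffian functions, so the substantive content is the one-dimensional tameness axiom.

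First I would introduce the class of \emph{semi-Pfaffian} sets in $\R^n$, namely finite boolean combinations of sets of the form $\{x\in \R^n \mid p(x,f_1(x),\dotsc,f_r(x))\ge 0\}$, where $(f_1,\dotsc,f_r)$ is a Pfaffian chain as in \autoref{defi-pfaffian} and $p$ is a polynomial. Khovanskii's fewnomial-type theory supplies an explicit upper bound on the number of connected components of such a set, depending only on $n$, $r$, $\deg p$, and the degrees appearing in the Pfaffian relations. This is the purely analytic/geometric input. The hard part, and the main obstacle, is a \emph{model completeness} statement: every set definable in $\Rpfaff$ must be realised as the image under a coordinate projection of a single semi-Pfaffian set, rather than as an iterated alternation of projections and complements. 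The difficulty is that complements of projections are not projections in general, so one must design Pfaffian ``certificates'' witnessing non-membership in the projection of a semi-Pfaffian set. Following Wilkie, the strategy is to analyse the boundary of such a projection via critical values of the projection map restricted to the semi-Pfaffian set, encode these critical points with Lagrange-multiplier-type equations, and thereby express the complement as the projection of an enlarged semi-Pfaffian set; the delicate point is that one must extend the Pfaffian chain with new functions coming from the implicit function theorem while verifying that the extended chain still satisfies \autoref{defi-pfaffian} in the strict sense, which requires a careful stratification and bookkeeping of formats.

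With model completeness in hand, o-minimality follows: an arbitrary definable subset $A\subseteq \R$ is the image under a coordinate projection of some semi-Pfaffian set $S\subseteq \R^{n+1}$, and Khovanskii's bound on the connected components of $S$, combined with a Hardt-type trivialisation argument ensuring that the projection can only merge rather than create components, forces $A$ to have finitely many connected components. Since a subset of $\R$ with finitely many connected components is exactly a finite union of points and intervals, this verifies the remaining axiom of \autoref{defi-ominimal} and establishes that $\Rpfaff$ is o-minimal.
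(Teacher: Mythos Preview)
The paper does not prove this theorem at all; it is stated with a bare citation to Wilkie's 1999 paper and used as a black box. So there is no ``paper's own proof'' to compare against---the intended answer here is simply to invoke \cite{Wilkie1999}.

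That said, your sketch is a reasonable high-level outline of Wilkie's argument: Khovanskii's finiteness theorem gives uniform bounds on connected components of semi-Pfaffian sets, and the hard part is a complement-type theorem showing that the class of sub-Pfaffian sets (projections of semi-Pfaffian sets) is closed under complementation, after which o-minimality follows since connected images of connected sets are connected. Two minor caveats on accuracy: first, what Wilkie proves in the 1999 paper is a general ``theorem of the complement'' for structures generated by smooth functions closed under partial differentiation with uniformly finitely many connected components at the quantifier-free level---this is related to but not literally the same as model completeness, and the technical machinery (Charbonnel closures, frontier analysis) differs from the Lagrange-multiplier approach you describe, which is closer in spirit to his 1996 treatment of $\R_{\exp}$. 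Second, you do not need Hardt trivialisation at the end; the observation that a continuous surjection cannot increase the number of connected components is enough once every definable set is sub-Pfaffian.
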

Pfaffian functions include all functions that can be defined iteratively using polynomials, exponentials, logarithms, derivatives and antiderivatives.
\begin{exa}\label{exa-C1def-function}
	The following activation functions $f\colon \R\to \R$ are $C^1$ and definable.
	\begin{enumerate}
		\item The \emph{logistic} function $f(x)=\frac{1}{1+e^{-x}}$, \label{logistic}
		\item The \emph{hyperbolic tangent} function $f(x)=\tanh(x)$,
		\item The \emph{softplus} function $f(x)= \log(1+e^x)$ \cite{Dugas2000},
		\item The \emph{swish} function $f(x)= \frac x{1+e^{-\beta x}}$ with $\beta>0$,
		\item The \emph{Gaussian error linear unit (GELU)} function $f(x)=x\cdot \frac 1 2(1 + \mathrm{erf}(\frac x {\sqrt 2}))$, where $\mathrm{erf}(x)=\frac 2 {\sqrt\pi}\int_0^xe^{-t^2}dt$ is the Gaussian error function \cite{Hendrycks2023},
		\item The \emph{Mish} function $f(x)=x \tanh(\log(1+e^x))$ \cite{Misra2020}, \label{mish}
		\item The \emph{exponential linear unit (ELU)} function $f(x)=\begin{cases}e^x-1,&x\leq 0\\ x,&x> 0\end{cases}$ \cite{Clevert2016}, 
		\item The \emph{softsign} function $f(x)=\frac{x}{|x|+1}$.
	\end{enumerate}
	The following loss functions $\ell\colon \R\times \R\to \R$ are $C^1$ and definable. 
	\begin{enumerate}
		\setcounter{enumi}{8}
		\item The \emph{squared error} $\ell(x,y)=(x-y)^2$, 
		\item The \emph{binary cross-entropy} $\ell(x,y) = -(x\log y + (1-x)\log(1-y))$, for $y\in (0,1)$,
		\item The \emph{Huber loss} $\ell(x,y)=\begin{cases}\frac 1 2 (x-y)^2,&|x-y|\leq \delta\\ \delta (|x-y|-\frac 1 2 \delta),& |x-y|>\delta\end{cases}$ with $\delta \in (0,\infty)$. \label{Huber}
	\end{enumerate}
	The following functions are definable but not $C^1$:
	\begin{enumerate}
		\setcounter{enumi}{12}
		\item The \emph{ReLU} function $f\colon \R\to \R,\quad f(x)=\max(0,x)$,
		\item The \emph{absolute error} $\ell\colon \R\times \R\to \R,\quad \ell(x,y)=|x-y|$.
	\end{enumerate}
\end{exa}
We proceed with a definition of neural network architectures. 
For a function $\psi\colon \R\to \R$ and an integer $d\in \N$, we denote by $\psi^{(d)}\colon \R^d\to \R^d,\quad \psi^{(d)}(x_1,\dotsc,x_d)\coloneqq (\psi(x_1),\dotsc,\psi(x_d))$ its \emph{amplification}.
\begin{defi}[Neural network architectures]\label{defi-ann}
	A \emph{neural network architecture}\footnote{or more precisely a \emph{fully connected feedforward} neural network} is a tuple $\mathcal A=(d_0,\dotsc,d_k,\psi)$ where $d_0,\dotsc,d_k\in \N$ and $\psi\colon \R\to \R$ is a function called the \emph{activation function}. 
	The \emph{dimension} of $\mathcal A$ is given by $d(\mathcal A)= \sum_{i=1}^k d_i(d_{i-1}+1)$. Fix an isomorphism $\R^{d(\mathcal A)}\mathrel{\hat=}\prod_{i=1}^k\R^{d_i\times d_{i-1}}\times \R^{d_i}$.  
	The \emph{response} $\mathcal N_\theta^\mathcal A\colon \R^{d_0}\to \R^{d_k}$ of the architecture $\mathcal A$ and parameters $\theta\mathrel{\hat=}(W_1,b_1,\dotsc,W_k,b_k)\in \R^{d(\mathcal A)}$ with $W_i\in \R^{d_i\times d_{i-1}}$ and $b_i\in \R^{d_i}$ is given by $\mathcal N_\theta^\mathcal A(x) \coloneqq f_k\circ \dotsc \circ f_1(x)$
	where $f_1,\dotsc,f_k$ are defined as 
	\[f_i(x)=
	\begin{cases} 
		\psi^{(d_i)}(W_i x + b_i), & i<k\\
		W_k x + b_k, & i=k.		
	\end{cases}
	\] 
\end{defi}
\begin{defi}[Generalized critical values]\label{defi-criticalvalues}
	Let $f\colon \R^d\to \R$ be a differentiable function. 
	\begin{enumerate}
	\item The set of \emph{critical points} of $f$ is the set $\{\theta\in \R^d\mid \nabla f(\theta)=0\}$.
	\item The set of \emph{critical values} of $f$ is the set $K_0(f)=\{f(\theta)\in \R \mid \theta \in \R^d, \,\nabla f(\theta)=0\}$.
	\item The set of \emph{asymptotic critical values} of $f$ is the set 
	\[K_\infty(f) = \{ c\in \R\mid \exists (\theta_n)_{n\in \N}\subseteq  \R^d \text{ s. t. } \|\theta_n\| \to \infty, \, f(\theta_n)\to c, \, \|\theta_n\| \|\nabla f(\theta_n)\|\to 0\}.\]
	\item The set of \emph{generalized critical values} of $f$ is the set $K(f)=K_0(f)\cup K_\infty(f)$.
	\end{enumerate}
\end{defi}
\begin{defi}[Locally Lipschitz derivative]\label{def-locally-lipschitz}
	A $C^1$-function $f\colon \R^n\to \R^m$ is said to have a \emph{locally Lipschitz derivative}, if for each $x\in \R^n$, there is an open neighbourhood $U \subseteq \R^n$ of $x$ and a constant $L>0$ such that for all $x_1,x_2\in U$, we have $\|Df(x_1)-Df(x_2)\|\leq L  \|x_1-x_2\|$.
\end{defi}

Below, we characterize the gradient flows of neural network loss landscapes in the $C^1$ definable setting. Our result is mostly a well-known consequence of the available literature. 
We consider both the empirical loss with respect to a finite dataset (see \autoref{item-finite-dataset} below) and the true loss with respect to a continuous data distribution (see \autoref{item-continuous-data} below).

For a point $x\in \R^n$, we denote by $\delta_x$ the Dirac measure at $x$. 
For a measure $\mu$ and a measurable function $p\colon \R^n\to [0,\infty)$, we write $d\mu(x)= p(x)dx$ if $\mu$ is given by $\mu(A)=\int_Ap(x)dx$ for all measurable sets $A$. The \emph{support} of a function $p\colon \R^n\to [0,\infty)$ is the set $\overline{\{x\in \R^n\mid p(x)\not=0\}}$. 
\begin{theorem}[Dichotomy for gradient flows]\label{thmA}
	Let $\mathcal A=(d_0,\dotsc,d_k,\psi)$ be a neural network architecture with a $C^1$ definable activation function $\psi$ (see \autoref{exa-C1def-function} \ref{logistic} - \ref{Huber}). Let $f\colon \R^{d_0}\to \R^{d_k}$ and $\ell\colon \R^{d_k}\times \R^{d_k}\to [0,\infty)$ be $C^1$ definable functions (the target and loss functions).
	Let $\mu$ be a probability measure on $\R^{d_0}$ with expected loss 
	$\mathcal L\colon \R^{d(\mathcal A)}\to \R$ defined by $\mathcal L(\theta)\coloneqq\mathbb E_{x\sim \mu}[ \ell(\mathcal N_\theta^\mathcal A(x),f(x))]$, such that
	\begin{enumerate}
		\item\label{item-finite-dataset} $\mu=\frac 1 n \sum_{i=1}^n \delta_{x_i}$ for some $n\in \N$ and $x_1,\dotsc,x_n\in \R^{d_0}$, or
		\item\label{item-continuous-data} $d\mu(x)= p(x)dx$ for a $C^1$ function $p\colon \R^{d_0}\to [0,\infty)$ such that $\mathcal L$ is definable, or
	\end{enumerate}
	that $\mu$ is a convex combination of cases \ref{item-finite-dataset} and \ref{item-continuous-data}.
	Assume moreover that $\ell,f$ and $\psi$ have locally Lipschitz derivatives (see \autoref{exa-C1def-function} \ref{logistic} - \ref{Huber}). 
	Then the following hold:
	\begin{enumerate}
		\setcounter{enumi}{2}
		\item\label{exi-unique} For every $\theta_0\in \R^{d(\mathcal A)}$, there exists a unique $C^1$ function $\Theta\colon [0,\infty)\to \R^{d(\mathcal A)}$ satisfying 
		\[\Theta(0)=\theta_0, \quad \Theta'(t)=-\nabla \mathcal L(\Theta(t)),\quad \forall t\in [0,\infty).\]
		\item For every $\Theta$ as in \ref{exi-unique}, exactly one of the following holds: 
		\begin{enumerate}
			\item either $\lim_{t\to \infty}\Theta(t)$ exists and is a critical point of $\mathcal L$, or
			\item $\lim_{t\to \infty}\|\Theta(t)\|=\infty$ and $\lim_{t\to \infty}\mathcal L(\Theta(t))$ is an asymptotic critical value of $\mathcal L$. 
		\end{enumerate}
		\item There exists an $\varepsilon>0$ such that for every $\Theta$ as in \ref{exi-unique} with $\mathcal L(\Theta(t_0))<\inf_{\theta\in \R^{d(\mathcal A)}}\mathcal L(\theta) +\varepsilon$ for some $t_0\in [0,\infty)$, it holds that $\lim_{t\to \infty}\mathcal L(\Theta(t))=\inf_{\theta\in \R^{d(\mathcal A)}}\mathcal L(\theta)$. 
	\end{enumerate}
\end{theorem}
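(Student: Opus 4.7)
The plan is to address the five claims of \autoref{thmA} in sequence, with the main technical work concentrated in the dichotomy. The definability and $C^1$ regularity of $\mathcal{L}$ (claims (i)--(ii)) follow from the closure properties in \autoref{defi-ominimal}: in the finite-dataset case, $\mathcal{L}$ is a finite sum of compositions of the $C^1$ definable functions $\psi, \ell, f$ with affine maps; in the continuous-density case, compact support of $p$ permits differentiation under the integral, and one checks that the resulting parametric integral stays inside a suitable Pfaffian extension of the o-minimal structure. Claim (iii) follows from Picard--Lindel\"of (the locally Lipschitz gradient hypothesis on $\ell, f, \psi$ propagates to $\mathcal{L}$ through the chain and product rules) together with global existence on $[0,\infty)$: the energy identity $\mathcal{L}(\Theta(0)) - \mathcal{L}(\Theta(t)) = \int_0^t \|\Theta'(s)\|^2 \, ds$, combined with Cauchy--Schwarz, bounds the length of $\Theta$ on any finite time interval and rules out finite-time blowup since $\mathcal{L} \geq 0$.

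The heart of the proof is claim (iv). Since $\mathcal{L}$ is non-increasing along $\Theta$ and bounded below, $c^\ast \coloneqq \lim_{t\to\infty} \mathcal{L}(\Theta(t))$ exists. If $\Theta$ is bounded, the standard argument via the Kurdyka--{\L}ojasiewicz inequality for definable functions \cite{Kurdyka1998}, applied on a compact neighborhood of the accumulation set of $\Theta$ and combined with the energy identity, yields finite length $\int_0^\infty \|\Theta'(s)\| \, ds < \infty$, so $\Theta$ converges to a single critical point of $\mathcal{L}$. If $\Theta$ is unbounded, I would first upgrade $\limsup_{t\to\infty}\|\Theta(t)\| = \infty$ to $\lim_{t\to\infty} \|\Theta(t)\| = \infty$: otherwise the trajectory would repeatedly return to some bounded shell on which the bounded-case KL argument forces convergence, contradicting unboundedness. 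The main obstacle is then to show $c^\ast \in K_\infty(\mathcal{L})$, i.e., to produce a sequence $t_n \to \infty$ with $\|\Theta(t_n)\|\,\|\nabla\mathcal{L}(\Theta(t_n))\| \to 0$. The plan here is to combine $\int_0^\infty \|\nabla\mathcal{L}(\Theta(s))\|^2 \, ds \leq \mathcal{L}(\theta_0) - c^\ast$ with a Kurdyka-type inequality at infinity, controlling $\mathcal{L} - c^\ast$ by a definable function of the $\|\theta\|$-weighted gradient norm, to rule out any uniform positive lower bound on $\|\Theta(t)\|\,\|\nabla\mathcal{L}(\Theta(t))\|$ along the trajectory.

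Claim (v) then follows quickly from the finiteness of the set $K(\mathcal{L})$ of generalized critical values \cite{dAcunto2000}. By claim (iv), the limit $c^\ast$ of any trajectory lies in $K(\mathcal{L})$; since initial losses arbitrarily close to $\inf \mathcal{L}$ are attainable by continuity of $\mathcal{L}$, monotonicity together with the finiteness of $K(\mathcal{L})$ force $\inf\mathcal{L} \in K(\mathcal{L})$. Choosing $\varepsilon > 0$ so that $(\inf\mathcal{L}, \inf\mathcal{L} + \varepsilon) \cap K(\mathcal{L}) = \emptyset$, any trajectory with $\mathcal{L}(\Theta(t_0)) < \inf\mathcal{L} + \varepsilon$ satisfies $c^\ast < \inf\mathcal{L} + \varepsilon$, and the gap condition forces $c^\ast = \inf\mathcal{L}$.
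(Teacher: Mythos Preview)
Your overall plan closely parallels the paper's proof: the KL-inequality argument for the bounded case, the finiteness of $K(\mathcal{L})$ via \cite{dAcunto2000} for claim (v), and the propagation of locally Lipschitz gradients to $\mathcal{L}$ all match. The one substantive gap is in the unbounded branch of claim (iv), where you reach for a ``Kurdyka-type inequality at infinity'' to exhibit $c^\ast \in K_\infty(\mathcal{L})$. No such inequality is needed, nor readily available in the form you describe; the paper's argument here is purely elementary and uses only the two ingredients you already isolated in your discussion of claim (iii).

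Concretely, the Cauchy--Schwarz bound you invoke for global existence yields the quantitative growth estimate
\[
\|\Theta(t)\| \;\leq\; \|\Theta(0)\| + \int_0^t \|\Theta'(s)\|\,ds \;\leq\; \|\Theta(0)\| + \sqrt{t}\Bigl(\int_0^t \|\Theta'(s)\|^2\,ds\Bigr)^{1/2} \;\leq\; \|\Theta(0)\| + \sqrt{t\,\mathcal{L}(\theta_0)},
\]
so that $\|\Theta(t)\|\,\|\nabla\mathcal{L}(\Theta(t))\| \leq \bigl(\|\Theta(0)\| + \sqrt{t\,\mathcal{L}(\theta_0)}\bigr)\|\Theta'(t)\|$. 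If the right-hand side had $\liminf$ equal to some $\varepsilon>0$, then for all large $t$ one would have $\|\Theta'(t)\|^2 \geq \tfrac{\varepsilon^2}{4}\bigl(\|\Theta(0)\| + \sqrt{t\,\mathcal{L}(\theta_0)}\bigr)^{-2}$, which is asymptotically of order $1/t$ and hence not integrable on $[T,\infty)$, contradicting the energy identity $\int_0^\infty \|\Theta'(s)\|^2\,ds \leq \mathcal{L}(\theta_0)$. Thus $\liminf_{t\to\infty}\|\Theta(t)\|\,\|\nabla\mathcal{L}(\Theta(t))\|=0$, furnishing the witness sequence for $c^\ast \in K_\infty(\mathcal{L})$. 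Note that o-minimality plays no role in this step; definability enters only in the bounded case (via \cite{Kurdyka1998}) and in claim (v) (via \cite{dAcunto2000}).
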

\begin{proof}
	The proof of \autoref{thmA} is given on page \pageref{prf-thmA} in \autoref{appendix-A}.
\end{proof}
\begin{rem}%
	Note that \autoref{thmA} applies to all the activation and loss functions listed in \autoref{exa-C1def-function} \ref{logistic} - \ref{Huber}. 
	In particular, \autoref{thmA} applies to loss functions $\mathcal L\colon \R^d\to \R$ of the form $\mathcal L(\theta)\coloneqq \frac 1 n \sum_{i=1}^n (\mathcal N_\theta^\mathcal A(x_i)-y_i)^2$ for pairwise distinct $x_1,\dotsc,x_n\in \R^{d_0}$ and $y_1,\dotsc,y_n\in \R^{d_k}$.
\end{rem}

	\section{Divergence of gradient flows for polynomial target functions}\label{sec-polynomial}
In this section, we state the main result of this paper (see \autoref{corPoly}) saying that for most of the common activation functions and any non-linear polynomial target function, the gradient flow almost always diverges to infinity while the loss converges to zero. 
Our conceptual starting point to this result is to isolate the key properties of common activation functions into the following definition.
\begin{defi}[SAD activation function]\label{defSAD}
	A function $f\colon \R^n\to \R^m$ is called \emph{sublinear analytic definable} (\textbf{SAD}) if it satisfies the following properties:
	\begin{enumerate}
		\myitem{(S)}\label{item-sublinear} $\limsup_{t\to \infty}\frac{\|f(tx)\|}{t} <\infty$, for all $x\in \R^n$, \hfill \textit{(sublinear)}
		\myitem{(A)}\label{item-analytic} $f$ is analytic, \hfill \textit{(analytic)}
		\myitem{(D)}\label{item-definable} $f$ is definable in some o-minimal structure (see \autoref{defi-ominimal}). \hfill \textit{(definable)}
	\end{enumerate}
\end{defi}
Practically all activation functions used in Machine Learning applications satisfy conditions \ref{item-sublinear} and \ref{item-definable} above. Moreover, many of them satisfy condition \ref{item-analytic} too. 
\begin{exa}[SAD activation functions]\label{egSAD}
	The activation functions \ref{logistic} - \ref{mish} in \autoref{exa-C1def-function} are SAD. 
\end{exa}
\begin{defi}[Loss function]\label{defLoss}
	We call a function $\ell\colon \R^n\times \R^n\to [0,\infty)$ a \emph{loss function} if and only if it satisfies for all $x,y\in \R^n$ that $x=y\Leftrightarrow\ell(x,y)=0$. 
\end{defi}
The main technical contribution of this work states that on a sufficiently big training dataset (or data distribution), a given SAD neural network can never fit any non-linear polynomial perfectly.

We call a Borel measurable set $A\subseteq \R^n$ \emph{conull} if its complement has Lebesgue measure zero.
\begin{theorem}[Non-representability of polynomials]\label{thmB}
	Let $\mathcal A=(d_0,\dotsc,d_k,\psi)$ be a neural network architecture with SAD activation function $\psi$ (see \autoref{exa-C1def-function} \ref{logistic} - \ref{mish}). 
	Let $f\colon \R^{d_0}\to \R^{d_k}$ be a polynomial of degree at least $2$. 
	Then there exist an integer $N\in \N$ and a conull dense open subset $\mathcal D_N\subseteq (\R^{d_0})^N$ such that the following holds: 
	For any measurable loss function $\ell\colon \R^{d_0}\times \R^{d_0}\to [0,\infty)$ and any probability measure $\mu$ on $\R^{d_0}$ such that 
	\begin{enumerate}
		\item\label{item-thmb-empirical} $\mu=\frac 1 n \sum_{i=1}^n \delta_{x_i}$ for some $n\geq N$ and $x_1,\dotsc,x_n\in \R^{d_0}$ with $(x_1,\dotsc,x_N)\in \mathcal D_N$, or
		\item\label{item-thmb-continuous} $d\mu(x)= p(x)dx$ for a measurable function $p\colon \R^{d_0}\to [0,\infty)$, or
	\end{enumerate}
	such that $\mu$ is a convex combination of \ref{item-thmb-empirical} and \ref{item-thmb-continuous},
	the expected loss
	$\mathcal L\colon \R^{d(\mathcal A)}\to [0,\infty)$ defined by $\mathcal L(\theta)\coloneqq \mathbb E_{x\sim \mu}[\ell(\mathcal N_\theta^\mathcal A(x),f(x))]$ satisfies $\mathcal L(\theta)>0$ for all $\theta\in \R^{d(\mathcal A)}$. 
	Moreover, in the case $d_0=1$, the set $\mathcal D_N$ can be chosen as the set of all tuples $(x_1,\dotsc,x_N)$ for which $x_1,\dotsc,x_N$ are pairwise distinct. 
\end{theorem}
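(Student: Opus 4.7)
The plan is to exploit the mismatch between the at-most-linear growth forced by sublinearity of the activation and the at-least-quadratic growth of the target polynomial $f$.

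First I would check that for every fixed $\theta$, the response $x\mapsto \mathcal{N}_\theta^\mathcal{A}(x)$ is itself SAD: analyticity and definability are immediate from closure of both classes under affine maps and componentwise composition with $\psi$, while sublinearity follows by a short induction on the depth $k$, using that an affine map has at most linear growth and that $\psi$ being sublinear means $|\psi(y)|\le C|y|+D$, a bound that is preserved through each layer and through the final linear read-out. Combined with $\deg f\ge 2$, restriction to a ray $t\mapsto tv$ on which $f$ grows at least quadratically shows that $\mathcal{N}_\theta-f$ is a \emph{non-zero} real analytic function for every $\theta$. Hence the set $E_\theta:=\{x\in\R^{d_0}:\mathcal{N}_\theta(x)=f(x)\}$ is a proper real-analytic, definable subset of $\R^{d_0}$; in particular it has Lebesgue measure zero and o-minimal dimension at most $d_0-1$.

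Case \ref{item-thmb-continuous} is now immediate: if $\mathcal{L}(\theta)=0$, then $\mathcal{N}_\theta=f$ holds $\mu$-almost everywhere, so $\{p>0\}\subseteq E_\theta$ up to a Lebesgue null set, and since $E_\theta$ is itself null this forces $\int p\,dx=0$, contradicting that $\mu$ is a probability measure. The convex combination case reduces to the pure cases since vanishing of the total expectation forces vanishing of each non-negative summand. For case \ref{item-thmb-empirical} I would form the definable set
\[ V := \bigl\{(\theta,x_1,\ldots,x_N)\in \R^{d(\mathcal{A})}\times (\R^{d_0})^N : \mathcal{N}_\theta(x_i)=f(x_i)\ \forall\,i\bigr\}, \]
whose fiber over a fixed $\theta$ is the $N$-fold product $(E_\theta)^N$, of dimension at most $N(d_0-1)$. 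The o-minimal fiber dimension formula yields $\dim V\le d(\mathcal{A})+N(d_0-1)$, so choosing $N:=d(\mathcal{A})+1$ gives $\dim \pi_2(V)<Nd_0$, where $\pi_2$ denotes the projection onto $(\R^{d_0})^N$. Since topological closure preserves dimension in an o-minimal structure, the closed definable set $\overline{\pi_2(V)}$ has empty interior and Lebesgue measure zero, whence $\mathcal{D}_N:=(\R^{d_0})^N\setminus\overline{\pi_2(V)}$ is open, dense and conull; by construction no $\theta$ interpolates $f$ on any $N$-tuple from $\mathcal{D}_N$, so $\mathcal{L}(\theta)>0$ for all $\theta$.

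For the refinement when $d_0=1$, the sets $E_\theta$ are zero-dimensional and hence finite, and the uniform finiteness theorem for definable families supplies a bound $N_0$ on $|E_\theta|$ independent of $\theta$; taking $N>N_0$ allows $\mathcal{D}_N$ to be chosen as the (open, dense, conull) set of tuples of pairwise distinct points. The step I expect to require the most care is the sublinearity induction in the unbounded-activation regime (softplus, GELU, swish, mish), where one must track growth constants cleanly through depth, together with the invocation of the fiber dimension formula in precisely the right form (maximum versus generic fiber dimension) and the verification that the closures and projections used remain definable — all standard consequences of o-minimal dimension theory but requiring meticulous bookkeeping.
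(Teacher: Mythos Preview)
Your proposal is correct and follows essentially the same route as the paper: show the response is SAD while the degree-$\ge 2$ polynomial $f$ is not, so $\mathcal N_\theta-f$ is a non-zero analytic definable function for every $\theta$; then handle the density case via the identity theorem, the empirical case via the o-minimal fiber-dimension inequality $\dim V\le d(\mathcal A)+N(d_0-1)$ together with closure-preserves-dimension (your $\overline{\pi_2(V)}$ is exactly the complement of the interior of the paper's $\mathcal D_N$, and your choice $N=d(\mathcal A)+1$ coincides with the paper's condition $d+(m-1)N<mN$), and the $d_0=1$ refinement via uniform finiteness. The only noteworthy deviation is in the sublinearity step: you propagate a global bound $|\psi(y)|\le C|y|+D$ directly through the layers, whereas the paper first proves that for $C^1$ definable univariate functions condition \ref{item-sublinear} is equivalent to boundedness of the derivative (using the monotonicity theorem) and inducts on that characterization---your argument is slightly more elementary and does not need definability at this step.
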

\begin{proof}
	The proof of \autoref{thmB} is given on page \pageref{prf-thmB} in \autoref{appendix-B}. 
\end{proof}
While \autoref{thmB} prevents the loss $\mathcal L(\theta)$ from being exactly zero, we prove that a fixed sufficiently big architecture $\mathcal A$ still achieves arbitrarily good approximations. This is a standard result (see \cite{DeRyck2021, DeVore2021, berner2021modern}) which we include for convenience of the reader. 
\begin{theorem}[Neural network approximation for polynomials]\label{thmC}
	Let $f\colon \R^m\to \R^r$ be a polynomial of degree $n\in \N$. Let $\mathcal A=(d_0,\dotsc,d_{k},\psi)$ be a neural network architecture with $d_0=m$ input neurons, $d_k=r$ output neurons and a non-polynomial analytic activation function $\psi$. Assume that $\mathcal A$ is sufficiently big in the sense that 
	\begin{enumerate}
		\item\label{item-hidden-neurons} there is a hidden layer $0<i<k$ of size $d_i\geq r\left(\binom{n+m}{m}-m\right)$,
		\item all previous hidden layers have size $\min(d_0,\dotsc,d_{i-1})\geq d_0$,
		\item all subsequent hidden layers have size $\min(d_{i+1},\dotsc,d_k)\geq d_k$. 
	\end{enumerate}  
	Then there exists a sequence of parameters $(\theta_j)_{j\in \N}\subseteq \R^{d(\mathcal A)}$ such that $(\mathcal N_{\theta_j}^\mathcal A)_{j\in \N}$ converges to $f$ uniformly on compact sets. In particular, we have $\inf_{\theta\in \R^{d(\mathcal A)}}\mathcal L(\theta)=0$ where $\mathcal L$ is as in \autoref{thmB}.
\end{theorem}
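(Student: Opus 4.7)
My plan is to reduce the multi-layer approximation to a shallow one concentrated at the distinguished hidden layer $i$, while arranging the layers before and after $i$ to approximate the identity map in a common $\varepsilon \downarrow 0$ limit. The two key ingredients are a Pinkus-style shallow-network polynomial-approximation lemma for non-polynomial analytic activations, and a linearization trick that turns the surrounding layers into approximate identities. The width hypothesis \ref{item-hidden-neurons} supplies exactly the number of neurons needed for the shallow approximation, while the lower bounds on the widths of the other hidden layers are precisely what is needed to thread $m$- and $r$-dimensional identity channels through them.

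As a preliminary I would pick $a \in \R$ with $\psi^{(j)}(a) \neq 0$ for every $j \in \N$, which is possible because $\psi$ is non-polynomial analytic, so each $\psi^{(j)}$ is a nonzero analytic function with a discrete zero set, and a countable union of such sets is countable. I would then establish the shallow approximation lemma: for any polynomial $p \colon \R^m \to \R^r$ of degree at most $n$ and any compact $K \subseteq \R^m$, there exist parameters $w_s \in \R^m$, $\beta_s \in \R$, $c_s \in \R^r$ for $s = 1, \ldots, r(\binom{n+m}{m}-m)$ together with $L \in \R^{r \times m}$ and $d \in \R^r$ such that $\sum_s c_s \psi(\langle w_s, x\rangle + \beta_s) + Lx + d$ approximates $p$ on $K$ arbitrarily well. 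The proof would Taylor-expand $\psi(t\langle w, x\rangle + a) = \sum_{k} \tfrac{\psi^{(k)}(a)}{k!} t^k \langle w, x\rangle^k$, isolate each homogeneous piece $\langle w, x\rangle^k$ through a divided difference in $t$ (using $\psi^{(k)}(a) \neq 0$), and polarize over $w$ to span all homogeneous polynomials of each degree $k \in \{0, 2, 3, \ldots, n\}$; the $m$ linear monomials are absorbed into $Lx$, which explains the $-m$ in the count. For the identity-propagation step, I would equip any layer of input dimension $d$ and output dimension $d' \geq d$ with $W = \bigl(\begin{smallmatrix}\varepsilon I_d \\ 0\end{smallmatrix}\bigr)$ and $b = a\cdot \mathbf{1}_{d'}$; Taylor expansion around $a$ yields output $\psi(a) + \varepsilon \psi'(a) z_s + O(\varepsilon^2)$ in the first $d$ coordinates and $\psi(a)$ in the remaining $d'-d$ coordinates, uniformly for $z$ in any compact set. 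The next layer's affine map absorbs the subtraction of $\psi(a)$ and the division by $\varepsilon\psi'(a)$, so the composition approaches the padded identity as $\varepsilon \downarrow 0$ uniformly on compact sets.

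Assembling, I would take a single family $\theta_\varepsilon$ in which the pre-$i$ layers identity-propagate $x$ to the input of layer $i$, layer $i$ realizes the Pinkus approximation of the non-linear and constant part of $f$, the post-$i$ layers identity-propagate the resulting near-$f(x)$ vector, and the final affine map $W_k(\cdot) + b_k$ folds in the remaining linear part $Lx$ of $f$. The main technical obstacle is bookkeeping the $\varepsilon$-scalings: inverting each identity approximation multiplies the $O(\varepsilon^2)$ Taylor remainder by a compensating factor of order $1/\varepsilon$, so the scales across all layers must be coordinated so that the cumulative error at the output stays $O(\varepsilon)$ uniformly on any fixed compact $K$. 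This is achievable because the number of layers is fixed and the relevant Taylor remainders are uniformly bounded on $K$, but the construction must be written with care so that no $O(1)$ error survives the composition. The concluding statement $\inf_\theta \mathcal{L}(\theta) = 0$ then follows: under case \ref{item-thmb-empirical} of \autoref{thmB} uniform convergence on compacta implies pointwise convergence at the finitely many data points, and under case \ref{item-thmb-continuous} the analogous convergence on a compact exhaustion of the support of $p$, together with dominated convergence using continuity of $\ell$, does the job.
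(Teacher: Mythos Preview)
Your overall strategy---concentrate the polynomial approximation at the distinguished hidden layer $i$ and use the remaining layers as approximate identities, then compose---is exactly the paper's approach, and the identity-propagation and composition steps are sound. There is, however, a genuine gap in how you treat the linear part of $f$.

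You propose to approximate only the non-linear and constant part of $f$ at layer $i$ and then have ``the final affine map $W_k(\cdot)+b_k$ fold in the remaining linear part $Lx$''. This cannot work in the given architecture: the final affine map acts on the output of layer $k-1$, not on the original input $x$, and there is no skip connection. After layer $i$ the signal you are identity-propagating through the post-$i$ layers lives in $\R^r$ (padded to width $d_l\geq r$), and the hypotheses only guarantee $d_l\geq r$ for $l>i$, so there is no room to carry $x$ alongside. Consequently $Lx$ is simply unavailable at the output layer, and your justification of the ``$-m$'' in the neuron count collapses.

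The paper handles this differently: the linear part is absorbed into the shallow approximation at layer $i$ itself. The key observation (the base case $n=1$ of the paper's induction) is that a \emph{single} hidden neuron already approximates any affine functional $x\mapsto\langle a,x\rangle+a_0$ on compacta, via
\[
\tfrac{j}{\psi'(x_0)}\bigl(\psi(\tfrac{1}{j}\langle a,x\rangle+x_0)-\psi(x_0)\bigr)\xrightarrow{j\to\infty}\langle a,x\rangle,
\]
with the constant absorbed in the output bias. Thus constants and all $m$ linear directions together cost one neuron per output coordinate, not $m+1$; this is the true origin of the ``$-m$''. Plugging this in, your shallow block at layer $i$ uses $r$ neurons for the linear/constant part and $r\sum_{k=2}^{n}\binom{k+m-1}{m-1}=r(\binom{n+m}{m}-m-1)$ neurons for degrees $2,\ldots,n$ via polarization, totaling exactly $r(\binom{n+m}{m}-m)$. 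With this correction the rest of your plan goes through and coincides with the paper's argument; your explicit $\varepsilon$-bookkeeping is a fine substitute for the paper's Arzel\`a--Ascoli lemma, and choosing a single expansion point $a$ with all $\psi^{(j)}(a)\neq 0$ is a harmless simplification of the paper's degree-by-degree choice.
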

\begin{proof}
	The proof of \autoref{thmC} is given on page \pageref{prf-thmC} in \autoref{appendix-C}. 
\end{proof}
We can now phrase our main result. It states that for a sufficiently big architecture, a sufficiently big dataset (or distribution) and a sufficiently good initialization, the gradient flow diverges to infinity while the loss converges to zero. 
\begin{cor}[Divergence for polynomial target functions]\label{corPoly}
	Let $f\colon \R^m\to \R^r$ be a polynomial of degree $\deg(f)\geq 2$. Let $\mathcal A=(d_0,\dotsc,d_{k},\psi)$ be a neural network architecture with $d_0=m$ input neurons, $d_k=r$ output neurons and a non-polynomial SAD activation function $\psi$ (see \autoref{exa-C1def-function} \ref{logistic} - \ref{mish}). 
	Assume that $\mathcal A$ is sufficiently big in the sense that 
	\begin{enumerate}
		\item there is a hidden layer $0<i<k$ of size $d_i\geq r\left(\binom{\deg(f)+m}{m}-m\right)$,
		\item all previous hidden layers have size $\min(d_0,\dotsc,d_{i-1})\geq d_0$,
		\item all subsequent hidden layers have size $\min(d_{i+1},\dotsc,d_k)\geq d_k$. 
	\end{enumerate}  
	Then there is an $N\in \N$ and a conull dense open subset $\mathcal D_N\subseteq (\R^{d_0})^N$ with the following property: For any $C^1$ definable loss function $\ell\colon \R^{d_0}\times \R^{d_0}\to [0,\infty)$ and
	any probability measure $\mu$ on $\R^{d_0}$ with expected loss
	$\mathcal L\colon \R^{d(\mathcal A)}\to [0,\infty)$ defined by 
	$\mathcal L(\theta)\coloneqq \mathbb E_{x\sim \mu}[\ell(\mathcal N_\theta^\mathcal A(x),f(x))]$, such that
	\begin{enumerate}
		\setcounter{enumi}{3}
		\item\label{item-finitedatad} $\mu=\frac 1 n \sum_{i=1}^n \delta_{x_i}$ for some $n\geq N$ and $x_1,\dotsc,x_n\in \R^{d_0}$ with $(x_1,\dotsc,x_N)\in \mathcal D_N$, or
		\item\label{item-infinitedatad} $d\mu(x)= p(x)dx$ for a $C^1$ function $p\colon \R^{d_0}\to [0,\infty)$ such that $\mathcal L$ is definable, or
	\end{enumerate}
	such that $\mu$ is a convex combination of \ref{item-finitedatad} and \ref{item-infinitedatad},
	there exists an $\varepsilon>0$ such that for every $C^1$ function $\Theta\colon [0,\infty)\to \R^d$ satisfying $\mathcal L(\Theta(t_0))<\varepsilon$ for some $t_0\in [0,\infty)$ and $\Theta'(t)=-\nabla \mathcal L(\Theta(t))$ for all $t\in [0,\infty)$, we have $\lim_{t\to \infty}\mathcal L(\Theta(t))=0$ and $\lim_{t\to \infty}\|\Theta(t)\|=\infty$. 
\end{cor}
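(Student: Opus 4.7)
The plan is to chain together Theorems A, B, and C, using the non-representability and approximation results to identify the infimum of $\mathcal L$ with a value that is approached but never attained, and then invoking the dichotomy to force divergence.

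First I would apply \autoref{thmB} to the polynomial target $f$ (which has degree at least $2$) and the architecture $\mathcal A$ to obtain the integer $N \in \N$ and the conull dense open subset $\mathcal D_N \subseteq (\R^{d_0})^N$ promised in the corollary. For any $\mu$ of the prescribed form, \autoref{thmB} then yields $\mathcal L(\theta) > 0$ for every $\theta \in \R^{d(\mathcal A)}$, so $\mathcal L$ admits no global minimizer. Next I would apply \autoref{thmC} to the same $f$ and $\mathcal A$; the size assumptions (i)--(iii) of the corollary match those of \autoref{thmC} with $n = \deg(f)$, so there is a sequence of parameters along which $\mathcal N^{\mathcal A}_{\theta_j}$ converges uniformly on compact sets to $f$. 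Combined with the compactness of $\supp \mu$ in case \ref{item-infinitedatad} (respectively the finiteness of the support in case \ref{item-finitedatad}) and the continuity of $\ell$, this yields $\mathcal L(\theta_j) \to 0$ and hence $\inf_{\theta} \mathcal L(\theta) = 0$.

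Having pinned down the infimum as $0$ but unattained, I would then appeal to \autoref{thmA}. The hypotheses transfer: $\psi$ is SAD and hence analytic, so $\psi$ is $C^\infty$ with locally Lipschitz gradient; $f$ is a polynomial; and $\ell$ is assumed $C^1$ definable. (A minor technical check is that the stated local Lipschitz hypothesis on $\ell$ required by \autoref{thmA} is implicitly available for the standard choices listed in \autoref{exa-C1def-function}; for a fully general $C^1$ definable $\ell$ one can reduce to this situation by restricting to the relevant compact subset of parameter space on bounded time windows, since definable $C^1$ functions are $C^\infty$ outside a lower-dimensional definable set.) \autoref{thmA} then provides a threshold $\varepsilon > 0$ such that any gradient flow $\Theta$ with $\mathcal L(\Theta(t_0)) < \inf_\theta \mathcal L(\theta) + \varepsilon = \varepsilon$ satisfies $\lim_{t \to \infty} \mathcal L(\Theta(t)) = 0$.

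Finally I would invoke the dichotomy in \autoref{thmA}~(iv): such a $\Theta$ either converges to a critical point $\theta^\ast$ or satisfies $\|\Theta(t)\| \to \infty$. In the first case, continuity of $\mathcal L$ forces $\mathcal L(\theta^\ast) = \lim_{t \to \infty} \mathcal L(\Theta(t)) = 0$, contradicting $\mathcal L > 0$ from \autoref{thmB}. Hence the flow must diverge, proving simultaneously $\|\Theta(t)\| \to \infty$ and $\mathcal L(\Theta(t)) \to 0$. The main obstacle is the compatibility of the regularity hypotheses on $\ell$ between the corollary (only $C^1$ definable) and \autoref{thmA} (locally Lipschitz gradient); modulo this mild book-keeping, the corollary follows cleanly by chaining the three ingredients.
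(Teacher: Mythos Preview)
Your proposal is correct and matches the paper's proof, which simply states that the corollary is a direct consequence of \autoref{thmA}, \autoref{thmB} and \autoref{thmC}. Regarding your concern about the locally Lipschitz hypothesis on $\ell$: that assumption in \autoref{thmA} is only used for the uniqueness part~\ref{exi-unique}, whereas the corollary quantifies over \emph{every} $C^1$ gradient flow and hence only needs the dichotomy and threshold parts, which already hold for $C^1$ definable $\mathcal L$ (cf.\ \autoref{existence of gradient flow}); so your ``mild book-keeping'' worry is in fact a non-issue.
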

\begin{proof}
	The corollary is a direct consequence of \autoref{thmA}, \autoref{thmB} and \autoref{thmC}.
\end{proof}

	\section{Numerical experiments}\label{sec-numerical} 
In this section, we empirically validate the theoretical results established in the previous sections (see \autoref{corPoly}), demonstrating the divergence phenomena of the neural network parameters while the loss converges to zero. 
Interestingly, we observe that despite a sufficiently good initialization is required in our theorem, in practice it is not required for this phenomenon to occur.
Our analysis begins with experiments on polynomial target functions and is then extended to more complex learning tasks, showing that this phenomenon persists beyond our settings and suggesting broader implications for neural network training.
Further implementation details are provided in \autoref{appendix-F}.

While our theoretical results hold in the continuous-time gradient flow regime, practical training is typically performed using discrete-time (stochastic) gradient descent. Since gradient descent can be viewed as an explicit discretization of gradient flow, we expect qualitatively similar behavior, with potential numerical differences arising due to discretization. 

The gradient flow dynamics of a differentiable loss function $\mathcal{L} \colon \mathbb{R}^d \to \mathbb{R}$ are defined by the ordinary differential equation
\begin{equation*}
    \theta(0) = \theta_0, \quad  \theta' (t) = - \nabla \mathcal{L}(\theta(t)), \quad \forall t \in [0,\infty),
\end{equation*}
which describes the evolution of parameters under infinitesimal gradient updates.
Gradient descent corresponds to an explicit (forward) Euler discretization of this flow:
\begin{equation*}
    \theta_{k+1} =\theta_k + \eta \theta' (t) |_{t = k\eta} = \theta_k - \eta \nabla \mathcal{L}(\theta_k).
\end{equation*}
Here $\theta_k \in \mathbb{R}^d$ denotes the parameter vector at iteration $k$, and $\eta > 0$ denotes the learning rate.

\subsection{Validation on polynomial target functions}
We train fully connected neural networks with three hidden layers to approximate polynomial target functions of varying input dimension.
To exhibit the generality of our findings, we consider five different SAD activation functions (see \autoref {defSAD}) and for each of them we plot averages of the loss and the parameter norm (using exponential moving averages and Monte Carlo averaging over 20 independent random initializations).

To most closely resemble the gradient flow dynamics, we begin our analysis with gradient descent, see \autoref{fig:poly}.
The slow divergence of parameters is theoretically justified by the fact that the growth of parameter norms along the gradient flow is $\mathcal O(\sqrt{t})$ (see \eqref{thetasqrtbound} in \autoref{appendix-A}).
In our setting, the growth seems to be logarithmic as illustrated in the exponentially rescaled \autoref{fig:exponential-rescaling} in \autoref{appendix-F}.
This is consistent with a similar logarithmic behavior proved in \cite[Theorem 4.3]{Lyu2020}.

\begin{figure}[H]
	\centering
	\includegraphics[width=.95\linewidth]{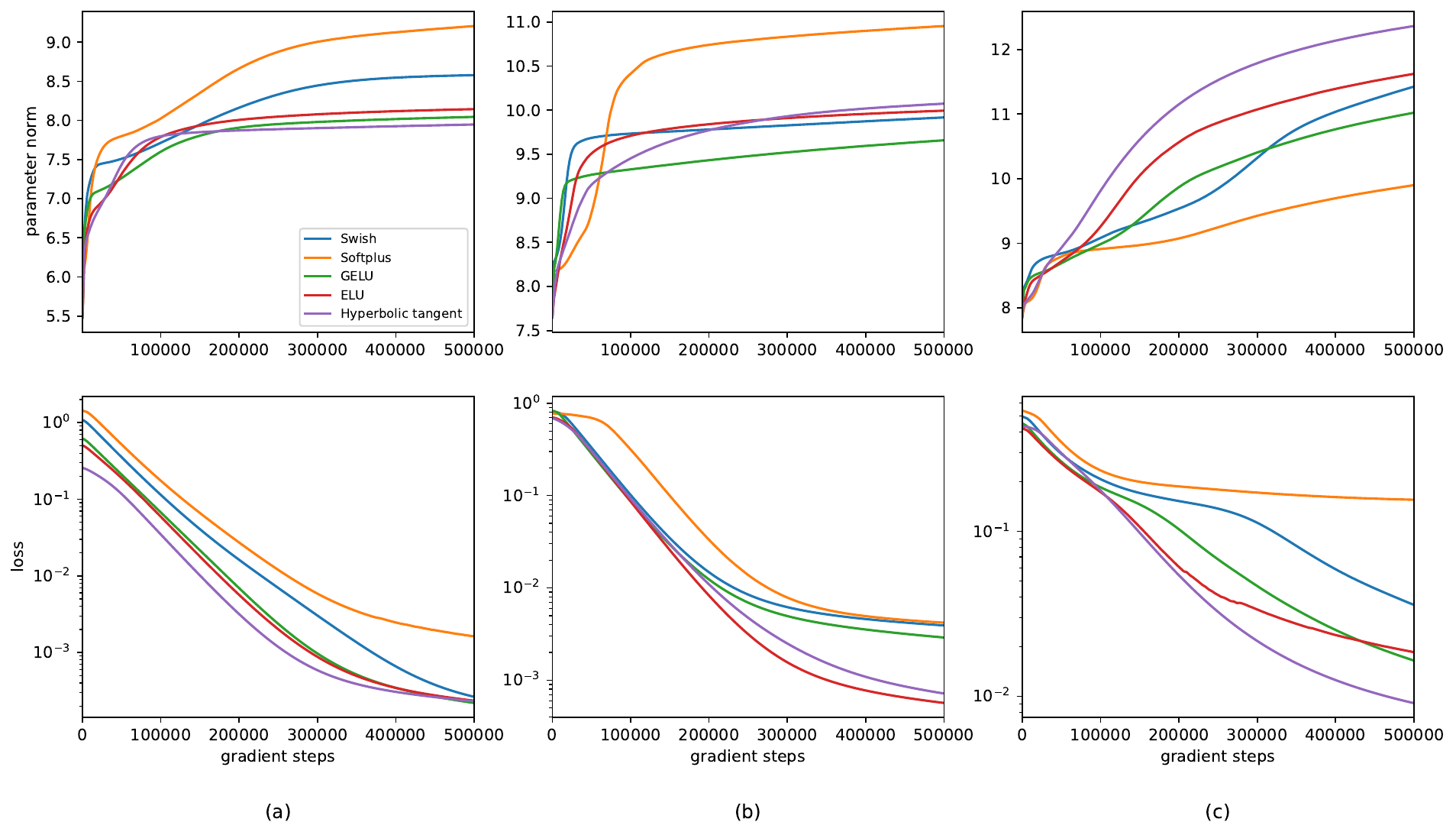}
	\caption{Approximation of polynomial target functions using different activation functions and GD algorithm. From left to right: 1-dimensional input, 2-dimensional input,  4-dimensional input. }
	\label{fig:poly}
\end{figure}

Motivated by this limitation, we also employ the Adam optimizer \cite{Kingma2014}, whose adaptive learning rate mitigates the vanishing gradients problem by dynamically rescaling the effective step size. In contrast to gradient descent, where the update magnitude decreases as $\mathcal O({1}/{\sqrt t})$ with time $t$, Adam maintains substantially larger updates during training, enabling faster convergence in scenarios where gradient descent or SGD would require more iterations to achieve comparable results.

\begin{figure}[H]
	\centering
	\includegraphics[width=.95\linewidth]{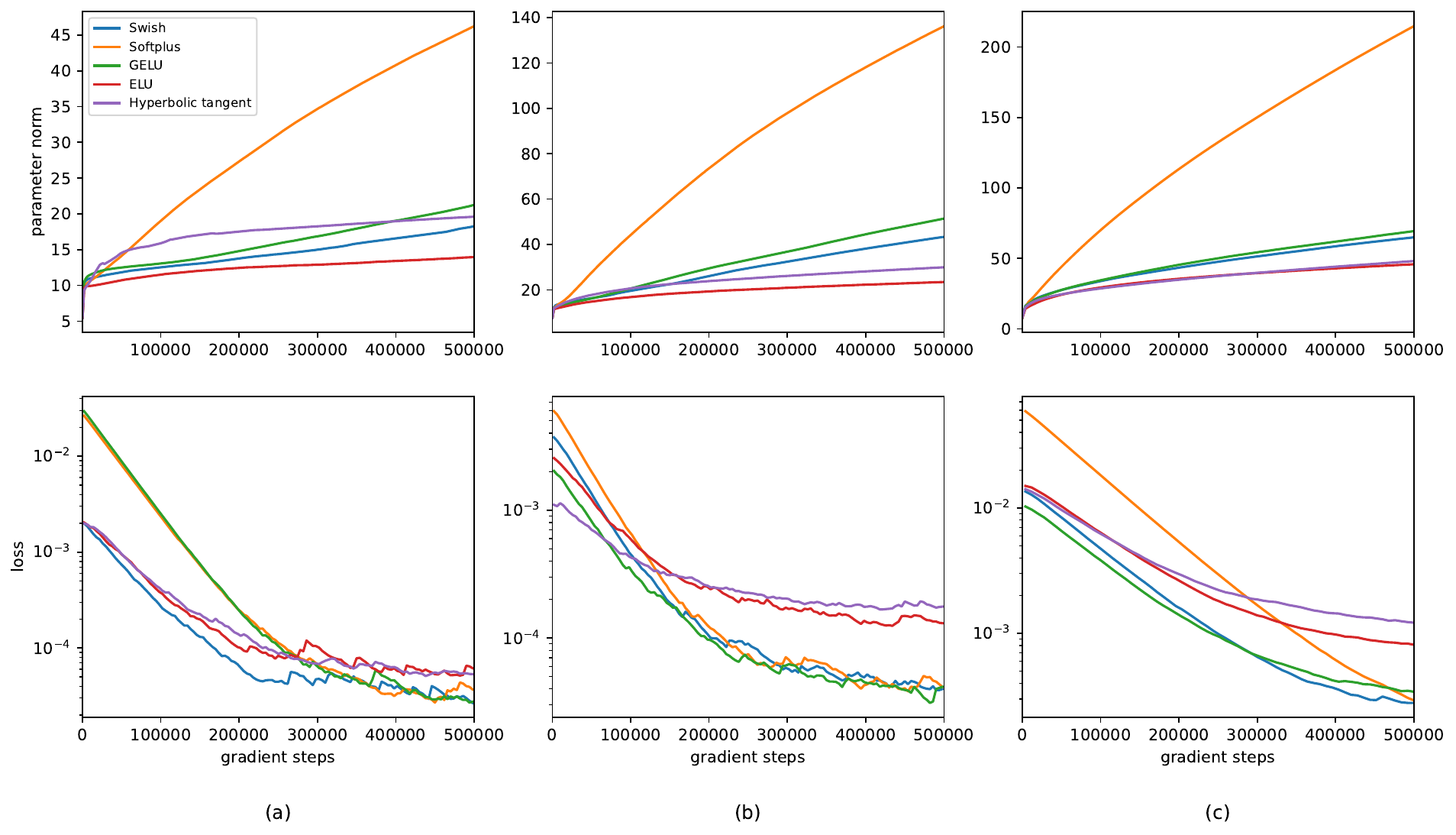}
	\caption{Approximation of polynomial target functions using different activation functions and Adam algorithm. From left to right: 1-dimensional, 2-dimensional, and 4-dimensional input.}
	\label{fig:poly:adam}
\end{figure}

In \autoref{fig:poly} and \autoref{fig:poly:adam}, we consider target functions with $1$-, $2$-, and $4$-dimensional input. In the multi-dimensional cases, we also scale up the network size, increasing the number of hidden neurons from $(d_1,d_2,d_3) = (10, 20,10)$ to $(20,40,20)$. Despite the simplicity of the task and the relatively small network size, we observe a significant growth in the norm of the neural network parameters when the mean squared error loss converges to zero, confirming our theoretical results.
\subsection{Extension to more complex learning tasks}%
We investigate whether the divergence behavior observed in the theoretical settings extends to more complex, real-world learning tasks. Interestingly, we find that the same phenomena persist. In particular, the growth of the norm of neural network parameters becomes even more evident in practical scenarios involving high-dimensional data.

In \autoref{fig:real} $(a)$, we train deep neural networks to approximate the solutions of two partial differential equations (PDEs): the Heat PDE and the Black–Scholes PDE.
These equations are defined as follows:

\textbf{Heat equation:}
    \begin{align*}
        \partial_t u(t, x) &= \tfrac{1}{2} \Delta u(t, x),   \qquad \qquad (t, x) \in [0, T] \times \mathbb{R}^d, \\
        u(0,x)&= \|x\|^2,   \qquad \qquad \qquad   x\in \R^d.
    \end{align*}
	
\textbf{Black–Scholes equation:}
    \begin{align*}
        \partial_t u(t, x) &+ \sum_{i=1}^d \left((r-c) x_i \partial_{x_i}u(t, x) + \tfrac{1}{2} \sigma_i^2 x_i^2 \partial_{x_i x_i} u(t, x)\right) = 0, \quad  (t, x) \in [0, T] \times (0,\infty)^d,\\
        u(0,x) &= e^{-rT}\max \{ \max \{x_1,\dotsc,x_d\}-K,0\}, \qquad \qquad \qquad x\in (0,\infty)^d,
    \end{align*}
where $r$ is the risk-free interest rate, $ c $ is the cost of carry, $ K $ is the strike price and $ \sigma $ is the volatility.
To solve these PDEs we employ the deep Kolmogorov method \cite{Beck2021}  which reformulates the PDEs as stochastic optimization problems using the Feynman–Kac representation.
Additionally, in \autoref{fig:real} $(b)$ we present results from a standard supervised learning task: image classification on the MNIST dataset \cite{LeCun1998}. This allows us to observe a similar blow-up phenomenon in a classification setting with discrete labels.

In all cases we observe a significant increase in the parameter norm, consistent with the behavior analyzed in the previous section. These results show that our theoretical findings can be generalized to realistic examples and are indicative of a general phenomenon. 
\begin{figure}[h]
	\centering
	\includegraphics[width=.65\linewidth]{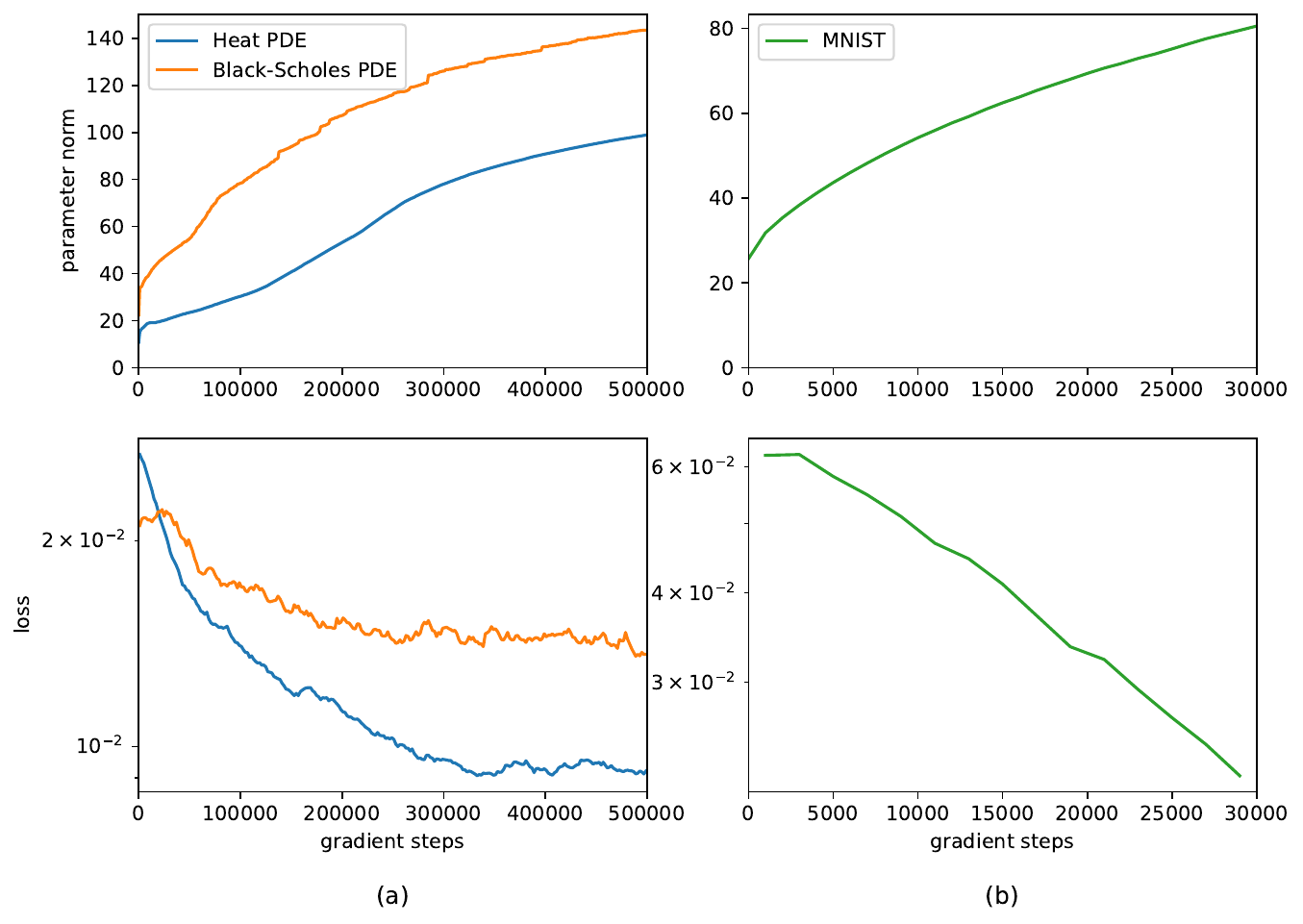}
	\caption{Left: approximation of Heat PDE and Black-Scholes PDE solutions using the deep Kolmogorov method. Right: image classification on the MNIST dataset. }
	\label{fig:real}
\end{figure}
For the approximation of the PDEs we employ a neural network with three hidden layers and the GELU activation function. 
The loss function displayed is the relative mean squared error between the model prediction and either the true solution or a Monte Carlo approximation of it.
In the MNIST simulations we flatten the input images and train a neural network with GELU activation that gradually decrease the dimension to $10$, using the cross-entropy loss to predict the correct input class. 
To generalize our results, every line represents the exponential moving average of the results, averaged over five independent random initializations.

	\section{Conclusion}\label{sec-conclusion}
We proved that the gradient flow of definable neural networks either converges to a critical point or diverges to infinity -- and that the loss values converge to one of finitely many generalized critical values (see \autoref{thmA}). 
We examined the special case of nonlinear polynomial target functions in detail. 
More specifically, assuming a sufficiently big training dataset and neural network architecture, we showed that if the activation function satisfies a mild condition (see \autoref{defSAD}), then the loss function cannot achieve zero loss (see \autoref{thmB}) but it does achieve arbitrarily low values (see \autoref{thmC}). 
From this we deduced our main result stating that in such a situation, the gradient flow diverges to infinity under mild assumptions (see \autoref{corPoly}).
We validated our findings by training neural networks on polynomial target functions using different optimization methods.
We then extended our investigation to more realistic scenarios and consistently observed a growth in the norm of the neural network parameters as the loss diminishes, suggesting that the divergence phenomenon we analyzed actually holds in much broader generality  (see \autoref{sec-numerical}).

	\newpage
	\section*{Acknowledgements}
	\addcontentsline{toc}{section}{Acknowledgements}  %
	This work has been supported by the Ministry of Culture and Science NRW as part of the Lamarr Fellow Network.
	In addition, this work has been partially funded by the Deutsche Forschungsgemeinschaft (DFG, German Research Foundation) under Germany’s Excellence Strategy EXC 2044-390685587, Mathematics Münster: Dynamics-Geometry-Structure.
	This work is partially supported via the AI4Forest project, which is funded by the German Federal Ministry of Education and Research (BMBF; grant number 01IS23025A), and the French National Research Agency (ANR). 
	We gratefully acknowledge the substantial computational resources that were made available to us by the PALMA II cluster at the University of Münster (subsidized by the DFG; INST 211/667-1).
	The authors are indebted to Floris Vermeulen and Mariana Vicaria for their help in improving the statement of \autoref{thmB}. 
	Helpful conversations with Christopher Deninger, Ksenia Fedosova, Allen Gehret, Fabian Gieseke, Vincent Grandjean, and Dennis Wulle are gratefully acknowledged.
	
	\phantomsection
	\addcontentsline{toc}{section}{References}  %
	\bibliographystyle{plainurl}
	\bibliography{refs}

	\newpage
	\appendix
	\begin{center}
		\textbf{\Large Appendix}\label{appendix}
	\end{center}

\section{Locally Lipschitz derivatives}
\label{appendix-Lipschitz}
The aim of this section is to establish that the expected loss appearing in \autoref{thmA} has a locally Lipschitz derivative whenever the activation function and the loss function have a locally Lipschitz derivative. This is needed to ensure uniqueness of the gradient flow for the ELU function, the softsign function and the Huber loss in \autoref{exa-C1def-function}. This section can safely be skipped if one is willing to assume that the activation function and the loss function are $C^2$.

\begin{lemma}\label{lem-locally-lipschitz}
	The class functions with locally Lipschitz derivatives (see Definition \ref{def-locally-lipschitz}) is closed under the following operations.
	\begin{enumerate}
		\item \label{C2}
		Every $C^2$-function has a locally Lipschitz derivative.
		\item \label{components}
		A function $f=(f_1,\dotsc,f_m)\colon \R^n\to \R^m$ has a locally Lipschitz derivative if and only if all of its components $f_1,\dotsc,f_m\colon \R^n\to \R$ have locally Lipschitz derivatives. 
		\item \label{sums-and-products}
		If $f\colon \R^{n_1}\to \R$ and $g\colon \R^{n_2}\to \R$ have locally Lipschitz derivatives, then 
		\[f\oplus g\colon \R^{n_1+n_2}\to \R,\quad f\oplus g(x,y)= f(x)+g(y)\]
		and 
		\[f\odot g\colon \R^{n_1+n_2}\to \R,\quad f\odot g(x,y)=f(x)g(y)\]
		have locally Lipschitz derivatives. 
		\item \label{compositions} 
		If $f\colon \R^m\to \R^k$ and $g\colon \R^n\to \R^m$ have locally Lipschitz derivatives, then 
		\[f\circ g\colon \R^n\to \R^k,\quad f\circ g(x)= f(g(x))\]
		has a locally Lipschitz derivative. 
		\item \label{integrals} 
		Let $a,b\in \R$ with $a<b$ and suppose that $f\colon \R^n\to \R$ has a locally Lipschitz derivative. Then the map 
		\[F\colon \R^{n-1}\to \R,\quad F(x)=\int_a^b f(t,x)dt\]
		has a locally Lipschitz derivative. 
	\end{enumerate}
\end{lemma}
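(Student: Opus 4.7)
The plan is to address the five items in order; (i)--(iii) reduce to direct estimates on compact neighbourhoods, while (iv) and (v) require slightly more bookkeeping but still follow by combining Lipschitz bounds for the differentials with local boundedness of the functions themselves (noting that $C^1$ functions are locally Lipschitz).

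For (i), given a $C^2$ function $f$ and a point $x_0$, pick an open ball $B$ around $x_0$ whose closure lies in the domain. The partial derivatives of $Df$ are continuous and therefore uniformly bounded on $\overline{B}$, so a componentwise application of the mean value theorem on the convex set $B$ gives the Lipschitz estimate. For (ii), the identification $Df = (Df_1,\dots,Df_m)$ together with equivalence of componentwise and operator norms on $\R^{m\times n}$ makes each direction immediate. For (iii), the sum case is trivial from $D(f\oplus g)(x,y) = (Df(x),Dg(y))$. In the product case $D(f\odot g)(x,y) = (g(y)Df(x),\,f(x)Dg(y))$; on any compact convex neighbourhood of $(x_0,y_0)$ the functions $f,g$ and the differentials $Df,Dg$ are all bounded (by continuity) and $Df,Dg$ are Lipschitz by assumption, so the splitting
\begin{equation*}
g(y_1)Df(x_1) - g(y_2)Df(x_2) = g(y_1)\bigl(Df(x_1)-Df(x_2)\bigr) + \bigl(g(y_1)-g(y_2)\bigr)Df(x_2),
\end{equation*}
combined with the fact that $g$ is locally Lipschitz (being $C^1$), yields the required bound; the $f\odot g$ second component is handled symmetrically.

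For (iv), the chain rule gives $D(f\circ g)(x) = Df(g(x))\,Dg(x)$. Around $x_0$, choose a compact convex neighbourhood $U$ on which $Dg$ is Lipschitz and $g$ is Lipschitz; continuity of $g$ then places $g(U)$ inside a compact convex set $V$ on which $Df$ is Lipschitz and bounded. The analogous splitting
\begin{equation*}
Df(g(x_1))Dg(x_1) - Df(g(x_2))Dg(x_2) = Df(g(x_1))\bigl(Dg(x_1)-Dg(x_2)\bigr) + \bigl(Df(g(x_1))-Df(g(x_2))\bigr)Dg(x_2),
\end{equation*}
together with $\|Df(g(x_1))-Df(g(x_2))\| \leq L_f \|g(x_1)-g(x_2)\| \leq L_f L_g \|x_1-x_2\|$, delivers the estimate.

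For (v), I would first justify differentiation under the integral, obtaining $DF(x) = \int_a^b D_x f(t,x)\,dt$; this is standard given that $D_x f$ is continuous on $\R^n$ and thus uniformly bounded on any compact set of the form $[a,b]\times \overline{U}$. The Lipschitz estimate for $DF$ on $U$ then follows by integrating the pointwise bound $\|D_x f(t,x_1) - D_x f(t,x_2)\| \leq L\|x_1-x_2\|$ over $t\in [a,b]$. The main obstacle is producing a single Lipschitz constant $L$ that is uniform in $t\in[a,b]$ and in $x_1,x_2$ in the fixed compact convex neighbourhood $\overline{U}$. The hypothesis gives only a local Lipschitz constant at each point of $[a,b]\times \overline{U}$; I would upgrade this to a uniform constant by covering the compact set $[a,b]\times \overline{U}$ with finitely many open balls on which $Df$ is Lipschitz, then chaining the estimates along straight-line segments in a slightly enlarged compact convex set via a Lebesgue number argument. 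This is the step where the compactness of $[a,b]$ is genuinely used.
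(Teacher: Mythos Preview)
Your proposal is correct and follows essentially the same approach as the paper: items (i)--(iv) match almost verbatim (the same splitting identities for products and compositions, the same use of local boundedness and local Lipschitzness of $C^1$ maps), and for (v) both arguments differentiate under the integral and then extract a uniform-in-$t$ Lipschitz constant via compactness of $[a,b]$. The only cosmetic difference is that in (v) the paper covers the one-dimensional fibre $[a,b]\times\{x_0\}$ by finitely many product neighbourhoods $(a_i,b_i)\times V$ sharing a common $V$, which sidesteps the Lebesgue-number/chaining step you outline; both routes are standard and equally valid.
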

\begin{proof}
	
	\begin{enumerate}
		\item[\ref{C2}] This is a well-known consequence of the fundamental theorem of calculus. 
		
		\item[\ref{components}]
		This follows from the inequality 
		\[\max_{1\leq i \leq m} \|Df_i(x)\|\leq \|Df(x)\|\leq \sqrt{m}\max_{1\leq i \leq m}\|Df_i(x)\|\]
		for all $x\in \R^n$. 
		
		\item[\ref{sums-and-products}]
		We only prove that $f\odot g$ has a locally Lipschitz derivative since the proof for $f\oplus g$ is similar. 
		Fix $(x_0,y_0)\in \R^{n_1}\times \R^{n_2}=\R^{n_1+n_2}$. Fix $L_f,L_g>0$ and  relatively compact open neighborhoods $x_0\in U_f\subseteq \R^{n_1}$ and $y_0\in U_g\subseteq \R^{n_2}$ satisfying 
		\[\|Df(x_1)-Df(x_2)\|\leq L_f \|x_1-x_2\|,\quad \forall x_1,x_2\in U_f\]
		and 
		\[\|Dg(y_1)-Dg(y_2)\|\leq L_g \|y_1-y_2\|,\quad \forall y_1,y_2\in U_g.\]
		Since $f$ and $g$ are $C^1$ and thus locally Lipschitz, there are constants $L_f',L_g'>0$ such that 
		\[\|f(x_1)-f(x_2)\|\leq L_f' \|x_1-x_2\|,\quad \forall x_1,x_2\in U_f\]
		and 
		\[\|g(y_1)-g(y_2)\|\leq L_g' \|y_1-y_2\|,\quad \forall y_1,y_2\in U_g.\]
		By continuity and relative compactness, the quantities
		\begin{align*}
			K_f\coloneqq \sup_{x\in U_f}|D_f(x)|,\quad
			K_f'\coloneqq \sup_{x\in U_f}|f(x)|,\quad
			K_g\coloneqq \sup_{y\in U_g}|D_g(y)|,\quad
			K_g'\coloneqq \sup_{y\in U_g}|g(y)|
		\end{align*}
		are all finite.
		Defining
		\[L\coloneqq K_f L_g' + K_f'L_g + K_g'L_f+K_gL_f'<\infty,\]
		we have for all $(x_1,y_1),(x_2,y_2)\in U_f\times U_g$ that
		\begin{align*}
			&\|D(f\odot g)(x_1,y_1)-D(f\odot g)(x_2,y_2)\|\\
			=&\left\| \begin{pmatrix} Df(x_1)g(y_1)\\ f(x_1)Dg(y_1)\end{pmatrix} - 
			\begin{pmatrix} Df(x_2)g(y_2)\\ f(x_2)Dg(y_2)\end{pmatrix}\right\|\\
			\leq &\|Df(x_1)g(y_1)-Df(x_2)g(y_2)\|+\|f(x_1)Dg(y_1)-f(x_2)Dg(y_2)\|\\
			\leq &\|Df(x_1)\|\|g(y_1)-g(y_2)\| + \|Df(x_1)-Df(x_2)\|\|g(y_2)\|\\
			&+ \|f(x_1)-f(x_2)\|\|Dg(y_1)\| + \|f(x_2)\|\|Dg(y_1)-Dg(y_2)\| \\
			\leq &(K_f L_g' + K_f'L_g)\|y_1-y_2\|+ (K_g'L_f+K_gL_f')\|x_1-x_2\|\\
			\leq &L\|(x_1,y_1)-(x_2,y_2)\|.
		\end{align*}
		This proves that $f\odot g$ has a locally Lipschitz derivative. 
		
		\item[\ref{compositions}]
		Fix $x_0\in \R^n$. 
		By assumption, there are $L_f,L_g>0$ and relatively compact open neighbourhoods $g(x_0)\in U_f\subseteq \R^m$ and $x_0\in U_g\subseteq \R^n$ satisfying 
		\[\|Df(y_1)-Df(y_2)\|\leq L_f \|y_1-y_2\|,\quad \forall y_1,y_2\in U_f\]
		and 
		\[\|Dg(x_1)-Dg(x_2)\|\leq L_g \|x_1-x_2\|,\quad \forall x_1,x_2\in U_g.\]
		By continuity of $g$, we may assume that $g(U_g)\subseteq U_f$. 
		Since $g$ is $C^1$ and thus itself locally Lipschitz, we may moreover assume that there is $L'_g>0$ such that 
		\[\|g(x_1)-g(x_2)\|\leq L'_g\|x_1-x_2\|,\quad \forall x_1,x_2\in U_g.\]
		By continuity of $Df$ and $Dg$ and relative compactness of $U_f$ and $U_g$, we have
		\[L\coloneqq L_f L'_g\sup_{x\in U_g}\|Dg(x)\|+L_g \sup_{y\in U_f}\|Df(y)\|<\infty.\]
		Moreover, for all $x_1,x_2\in U_f$, we have 
		\begin{align*}
			\|D(f\circ g)(x_1)-D(f\circ g)(x_2)\|
			=& \|Df(g(x_1))Dg(x_1)-Df(g(x_2))Dg(x_2)\| \\
			\leq& \|Df(g(x_1))-Df(g(x_2))\|\|Dg(x_1)\|\\
			&+\|Df(g(x_2))\|\|Dg(x_1)-Dg(x_2)\|\\
			\leq& L\|x_1-x_2\|.
		\end{align*}
		This proves that $f\circ g$ has a locally Lipschitz derivative. 
		
		\item[\ref{integrals}]
		Fix $x_0\in \R^n$. 
		Since $f$ has a locally Lipschitz derivative, there is for every $t\in [a,b]$ a constant $L_t>0$ and an open neighbourhood $(t,x_0)\in U_t\subseteq \R^n$ such that for all $(t_1,x_1),(t_2,x_2)\in U_t$, we have 
		\[\|Df(t_1,x_1)-Df(t_2,x_2)\|<L_t\|(t_1,x_1)-(t_2,x_2)\|.\]
		By compactness, we can find $t_1,\dotsc,t_k\in [a,b]$ such that $[a,b]\times \{x_0\}\subseteq \bigcup_{i=1}^k U_{t_i}$. 
		Without loss of generality, we may assume that each $U_{t_i}$ for $i=1,\dotsc,k$ is of the form $(a_i,b_i)\times V$ for an open neighbourhood $x_0\in V\subseteq \R^{n-1}$ and $a_i,b_i\in \R$. 
		Now define $L\coloneqq (b-a)\max_{1\leq i\leq k}L_{t_i}$ and let $x_1,x_2\in V$ be arbitrary.
		Using the Leibniz integral rule, we get 
		\begin{align*}
			\|DF(x_1)-DF(x_2)\|&=\left \|\int_a^b (D_x f(t,x_1)-D_x f(t,x_2))dt\right\| \\
			&\leq \int_a^b\|Df(t,x_1)-Df(t,x_2)\|dt\\
			&\leq \int_a^b \max_{1\leq i \leq k}L_{t_i}\|x_1-x_2\|dt\\
			&\leq L\|x_1-x_2\|,
		\end{align*}
		where $D_xf(t,x_i)$ denotes the differential of the map $x\mapsto f(t,x)$ at $x=x_i$. This proves that $F$ has a locally Lipschitz derivative.
	\end{enumerate}
\end{proof}

\begin{cor}\label{cor-locally-lipschitz}
	Let $\mathcal A=(d_0,\dotsc,d_k,\psi)$ be a neural network architecture with a $C^1$ activation function $\psi\colon \R\to \R$ with a locally Lipschitz derivative. Let $\ell\colon\R^{d_k}\times \R^{d_k}\to [0,\infty)$ be a $C^1$ loss function with a locally Lipschitz derivative.  
	Let $f\colon \R^{d_0}\to \R^{d_k}$ be a $C^1$ target function with a locally Lipschitz derivative. 
	Let $\mu$ be a probability measure on $\R^{d_0}$ such that either 
	\begin{enumerate}
		\item $\mu=\frac 1 n \sum_{i=1}^n \delta_{x_i}$ for some $n\in \N$ and $x_1,\dotsc,x_n\in \R^{d_0}$, or
		\item $d\mu(x)= p(x)dx$ for a $C^1$ function $p\colon \R^{d_0}\to [0,\infty)$ of compact support.
	\end{enumerate}
	Then the function 
	\[\mathcal L\colon \R^{d(\mathcal A)}\to \R,\quad \mathcal L(\theta)=\mathbb E_{x\sim \mu}[ \ell(\mathcal N_\theta^\mathcal A(x),f(x))]\]
	has a locally Lipschitz derivative. 
\end{cor}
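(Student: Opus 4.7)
The plan is to bootstrap \autoref{lem-locally-lipschitz}: we show step by step that the two-variable response, the pointwise integrand, and finally the expected loss each have locally Lipschitz differential. First, we claim the map $\Phi\colon \R^{d(\mathcal A)}\times \R^{d_0}\to \R^{d_k}$ defined by $\Phi(\theta,x)=\mathcal N_\theta^\mathcal A(x)$ has locally Lipschitz differential. Indeed, $\Phi$ is a composition of affine maps of the form $(\theta,y)\mapsto W_iy+b_i$, which are polynomial in $(\theta,y)$ and hence $C^2$ (apply part \ref{C2} of the lemma), with the amplifications $\psi^{(d_i)}$. Each amplification has locally Lipschitz differential by part \ref{components}, once we note that each scalar component $y\mapsto \psi(y_j)$ is $\psi$ precomposed with a coordinate projection, and so has locally Lipschitz differential by parts \ref{C2} and \ref{compositions}. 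Iterating part \ref{compositions} layer by layer then yields the claim for $\Phi$.

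Next, the pointwise integrand $H(\theta,x)\coloneqq \ell(\Phi(\theta,x),f(x))$ has locally Lipschitz differential: the auxiliary map $(\theta,x)\mapsto (\Phi(\theta,x),f(x))$ has locally Lipschitz differential by parts \ref{components} and \ref{compositions} (pulling $f$ back along the projection $(\theta,x)\mapsto x$), and post-composing with $\ell$ (whose locally Lipschitz gradient amounts to a locally Lipschitz differential after identifying $\R^{d_k}\times \R^{d_k}$ with $\R^{2d_k}$) via part \ref{compositions} yields $H$. In the empirical case $\mathcal L(\theta)=\frac{1}{n} \sum_{i=1}^n H(\theta,x_i)$, each summand is $H$ precomposed with the affine map $\theta\mapsto (\theta,x_i)$ (part \ref{compositions}); finite sums of such functions then preserve locally Lipschitz differential via part \ref{sums-and-products} combined with the diagonal embedding $\theta\mapsto (\theta,\theta)$ and part \ref{compositions}.

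The main obstacle is the continuous case, since $p$ is only assumed to be $C^1$ definable with compact support and therefore need not itself have locally Lipschitz differential; consequently, part \ref{integrals} of the lemma cannot be applied directly to the product $H(\theta,x)p(x)$. We sidestep this by a direct compact cover argument modeled on the proof of part \ref{integrals}. Fix $\theta_0$ and a relatively compact open neighborhood $U$ of $\theta_0$. Since $H\in C^1$ and $p$ is continuous and compactly supported, dominated convergence justifies differentiation under the integral, giving $\mathcal L\in C^1$ with $\nabla \mathcal L(\theta)=\int \nabla_\theta H(\theta,x)\, p(x)\,dx$. By the previous paragraph, $\nabla_\theta H$ is locally Lipschitz in $\theta$ at every point of $\R^{d(\mathcal A)}\times \R^{d_0}$; a finite subcover of the compact set $\bar U\times \mathrm{supp}(p)$ extracts a uniform constant $L$ such that $\|\nabla_\theta H(\theta_1,x)-\nabla_\theta H(\theta_2,x)\|\leq L\|\theta_1-\theta_2\|$ for all $\theta_1,\theta_2\in U$ and $x\in \mathrm{supp}(p)$. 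Integrating this estimate against $p(x)\,dx$ (of total mass one) and applying the triangle inequality then delivers the desired local Lipschitz bound on $\nabla \mathcal L$ in a neighborhood of $\theta_0$, completing the proof.
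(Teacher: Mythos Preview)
Your proof is correct and follows the paper's one-line approach of iterating the closure properties in \autoref{lem-locally-lipschitz}, spelling out each step explicitly. Your observation that part \ref{integrals} does not apply verbatim in the continuous case---because $p$ is only assumed $C^1$, so the integrand $H(\theta,x)p(x)$ need not have locally Lipschitz differential in the joint variable $(\theta,x)$---is well taken, and your direct compactness argument on $\bar U\times\mathrm{supp}(p)$ is a valid fix for a point the paper's terse proof glosses over.
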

\begin{proof}
	This immediately follows from \autoref{lem-locally-lipschitz} since $\mathcal L$ can be constructed iteratively from $\psi$ and linear functions using the operations \ref{C2} - \ref{integrals}.
\end{proof}
	
	\section{Proof of \autoref{thmA}}\label{appendix-A}
This section gives a proof of \autoref{thmA} which is probably well-known to experts and follows standard techniques. 
The proof is mostly given for lack of a precise reference and for convenience of the reader. 
Before giving the technical details, we outline the main ideas below. 
\subsection*{Idea of proof}
 Existence and uniqueness of the gradient flow for $\mathcal L$ follow from the fact that $\mathcal L$ is $C^1$ and has a locally Lipschitz derivative. The latter assumption is checked in \autoref{appendix-Lipschitz}. Now one can distinguish two cases: Either the gradient flow visits some bounded set infinitely often or it does not. In the first case, we use Kurdyka's definable {\L}ojasiewicz inequality \cite{Kurdyka1998} and follow an argument from \cite[Theorem 2.2]{Absil2005} to prove convergence to a critical point (see \autoref{convergence to accumulation point}). In the second case, the parameters diverge by assumption. We check that the limit is a generalized critical value using the fact that $\mathcal L$ is bounded below and therefore has square-integrable gradients along any gradient flow trajectory (see \autoref{existence of gradient flow}). 

For a point $x_0\in \R^n$ and $R>0$, we denote by 
\[B_R(x_0)\coloneqq \{x\in \R^n\mid \|x-x_0\|<R\}\subseteq \R^n\]
the open ball of radius $R$ centered at $x_0$. 
\begin{lemma}\label{convergence to accumulation point}
	Let $\mathcal L\colon \R^d\to [0,\infty)$ be a $C^1$ definable function and let $\Theta\colon [0,\infty)\to \R^d$ a $C^1$-function satisfying the differential equation $\Theta'(t)= -\nabla \mathcal L(\Theta(t))$ for all $t\in [0,\infty)$. Suppose that there is a point $\theta_*\in \R^d$ satisfying 
	\begin{equation}\label{accumulation}
		\liminf_{t\to \infty}\|\Theta(t)-\theta_*\|=0.
	\end{equation} 
	Then we have $\lim_{t\to \infty}\Theta (t)=\theta_*$. 
\end{lemma}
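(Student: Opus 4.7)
The plan is to follow the classical Absil-Mahony-Andrews strategy \cite{Absil2005} but use Kurdyka's generalization \cite{Kurdyka1998} of the {\L}ojasiewicz inequality in place of the analytic version, since $\mathcal L$ is only assumed definable. First, along any gradient flow trajectory one has $\frac{d}{dt}\mathcal L(\Theta(t)) = -\|\nabla \mathcal L(\Theta(t))\|^2 \leq 0$, so $\mathcal L \circ \Theta$ is monotonically non-increasing. Combined with the accumulation hypothesis \eqref{accumulation} and continuity of $\mathcal L$, this forces $\mathcal L(\Theta(t)) \downarrow \ell_* \coloneqq \mathcal L(\theta_*)$ as $t \to \infty$. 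If $\mathcal L(\Theta(t_0)) = \ell_*$ at some finite $t_0$, then monotonicity and the identity above give $\nabla \mathcal L(\Theta(t)) = 0$ for all $t \geq t_0$, so $\Theta$ is constant on $[t_0, \infty)$ and the accumulation hypothesis forces this constant to equal $\theta_*$. Hence I may assume $\mathcal L(\Theta(t)) > \ell_*$ for all $t \geq 0$.

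Next I would invoke Kurdyka's inequality for the $C^1$ definable function $\mathcal L$ at $\theta_*$: there exist $r > 0$, $\nu > 0$ and a continuous, strictly increasing, definable function $\phi \colon [0,\nu) \to [0,\infty)$ with $\phi(0) = 0$, continuously differentiable on $(0,\nu)$, such that
\[\phi'(\mathcal L(\theta) - \ell_*)\,\|\nabla \mathcal L(\theta)\| \geq 1\]
whenever $\theta \in B_r(\theta_*)$ and $0 < \mathcal L(\theta) - \ell_* < \nu$. Whenever the trajectory satisfies these conditions, the chain rule yields
\[\frac{d}{dt}\phi(\mathcal L(\Theta(t)) - \ell_*) = -\phi'(\mathcal L(\Theta(t)) - \ell_*)\,\|\nabla \mathcal L(\Theta(t))\|^2 \leq -\|\nabla \mathcal L(\Theta(t))\| = -\|\Theta'(t)\|.\]
Integrating this inequality turns boundedness of the left-hand side into a bound on the arc length of $\Theta$, which is the mechanism that prevents the trajectory from escaping $\theta_*$.

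The main obstacle is the bootstrap step that keeps the trajectory inside $B_r(\theta_*)$ long enough for the above differential inequality to be applied uniformly. To handle this, I would pick $T_0$ large enough that $\|\Theta(T_0) - \theta_*\| < r/2$ (possible by \eqref{accumulation}), that $\mathcal L(\Theta(T_0)) - \ell_* < \nu$, and that $\phi(\mathcal L(\Theta(T_0)) - \ell_*) < r/2$ (possible since $\mathcal L(\Theta(t)) \to \ell_*$ and $\phi(0) = 0$). Set $T_1 \coloneqq \inf\{t \geq T_0 : \Theta(t) \notin B_r(\theta_*)\}$, with the convention $T_1 = \infty$ if the set is empty. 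For $t \in [T_0, T_1)$ both hypotheses of the KL inequality are met, so integrating and applying the triangle inequality gives
\[\|\Theta(t) - \theta_*\| \leq \|\Theta(T_0) - \theta_*\| + \int_{T_0}^t \|\Theta'(s)\|\, ds \leq \tfrac{r}{2} + \phi(\mathcal L(\Theta(T_0)) - \ell_*) < r.\]
By continuity of $\Theta$ this contradicts $T_1 < \infty$, so $T_1 = \infty$ and $\int_{T_0}^\infty \|\Theta'(s)\|\,ds < \infty$. Finite arc length implies $\lim_{t \to \infty}\Theta(t)$ exists in $\R^d$, and the accumulation hypothesis \eqref{accumulation} pins this limit to $\theta_*$.
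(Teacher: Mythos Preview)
The proposal is correct and follows essentially the same Absil--Mahony--Andrews trapping argument as the paper, invoking Kurdyka's \L{}ojasiewicz inequality to bound the arc length of the trajectory inside a small ball around $\theta_*$ and then showing the trajectory cannot escape. The only cosmetic difference is that the paper quotes Kurdyka's arc-length bound \cite[Theorem~2(b)]{Kurdyka1998} directly (obtaining the function $\sigma$ that controls $\int_{t_1}^{t_2}\|\Theta'\|$), whereas you use the pointwise desingularizing-function form of the KL inequality and integrate it yourself; the bootstrap step and the resulting contradiction are otherwise identical.
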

\begin{proof}
	Without loss of generality, we may assume that $\mathcal L(\theta_*)=0$.
	Note that the function 
	${\mathcal L\circ \Theta\colon [0,\infty)\to \R}$
	is non-increasing since we have 
	$\frac{d}{dt}\mathcal L(\Theta(t))=-\|\nabla \mathcal L(\Theta(t))\|^2$ for all $t\in [0,\infty)$. 
	Since $\mathcal L$ is continuous, \eqref{accumulation} and the fact that $\mathcal L$ is non-increasing imply that
	\begin{equation}\label{convergetoacc}
		\lim_{t\to \infty}\mathcal L(\Theta(t))=\mathcal L(\theta_*)=0.
	\end{equation}
	Now fix $\varepsilon>0$. 
	By \cite[Theorem 2 (b)]{Kurdyka1998}, there is a continuous strictly increasing definable function $\sigma\colon [0,\infty)\to [0,\infty)$ with $\sigma(0)=0$ such that whenever $t_1,t_2\in [0,\infty)$ with $t_1<t_2$ satisfy $\theta([t_1,t_2])\subseteq B_{2\varepsilon}(\theta_*)$, we have
	\begin{equation}\label{Kurdyka}
		\int_{t_1}^{t_2}\|\Theta'(t)\|dt\leq \sigma(|\mathcal L(\Theta(t_2))-\mathcal L(\Theta(t_1))|).
	\end{equation}
	Since $t\mapsto \mathcal L(\Theta(t))$ is nonincreasing, \eqref{Kurdyka} implies that
	\begin{equation}\label{Kurdyka2}
		\int_{t_1}^{t_2}\|\Theta'(t)\|dt\leq \sigma(|\mathcal L(\Theta(t_2))-\mathcal L(\Theta(t_1))|)\leq \sigma(\mathcal L(\Theta(t_1)))
	\end{equation}
	for all $t_1,t_2\in [0,\infty)$ satisfying $\Theta([t_1,t_2])\subseteq B_{2\varepsilon}(\theta_*)$. 
	
	By continuity of $\sigma$ and \eqref{convergetoacc}, we can find a large enough $t_1>0$ such that $\sigma(\mathcal L(\Theta(t_1)))<\frac \varepsilon 2$. 
	By \eqref{accumulation}, we may moreover assume that $\Theta(t_1)\in  B_{\frac \varepsilon 2}(\theta_*)$.
	We claim that we have $\Theta(t)\in B_\varepsilon(\theta_*)$ for all $t>t_1$. 
	
	Assuming the contrary, there is a smallest $t_2>t_1$ satisfying $\|\Theta(t_2)-\Theta_*\|=\varepsilon$. But then we have $\Theta([t_1,t_2])\subseteq B_{2\varepsilon}(\theta_*)$ 
	and by an application of \eqref{Kurdyka2}, we get 
	\begin{align*}
		\|\Theta(t_2)-\theta_*\|&\leq \|\Theta(t_2)-\Theta(t_1) \|+\|\Theta(t_1)-\theta_*\|\\
		&< \int_{t_1}^{t_2}\|\Theta'(t)\|dt + \frac \varepsilon 2\\
		&\leq \sigma(\mathcal L(\Theta(t_1))) + \frac \varepsilon 2\\
		&<\varepsilon,
	\end{align*}
	a contradiction. Thus, we have $\Theta([t_1,\infty))\subseteq B_\varepsilon(\theta_*)$. 
	Since $\varepsilon>0$ was arbitrary, the lemma is proven. 
\end{proof}

\begin{lemma}\label{existence of gradient flow}
	Let $\mathcal L\colon \R^d\to [0,\infty)$ be a $C^1$ function. Then for any $\theta_0\in \R^d$, there is a $C^1$-function $\Theta\colon [0,\infty)\to \R^d$ satisfying
	\begin{equation}\label{ode}
		\Theta'(t)=-\nabla \mathcal L(\Theta(t)),\quad \Theta(0)=\theta_0
	\end{equation}
	for all $t\in [0,\infty)$.
	Moreover, the following statements are true:
	\begin{enumerate}
		\item If $\mathcal L$ moreover has a locally Lipschitz derivative, then the solution $\Theta$ to \eqref{ode} is unique. \label{item-uniqueness}
		\item If $\mathcal L$ is moreover definable, then exactly one of the following two statements holds true. \label{item-dichotomy}
		\begin{enumerate}
			\item $\lim_{t\to \infty}\Theta(t)$ exists and is a critical point of $f$, or
			\item $\lim_{t\to \infty}\|\Theta(t)\|=\infty$ and $\lim_{t\to \infty}\mathcal L(\Theta(t))$ is a generalized critical value of $\mathcal L$ (see \autoref{defi-criticalvalues}).
		\end{enumerate}
	\end{enumerate}
	
\end{lemma}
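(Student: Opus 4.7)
The starting point is the energy identity $\frac{d}{dt}\mathcal L(\Theta(t)) = -\|\nabla \mathcal L(\Theta(t))\|^2$, valid for any $C^1$ solution. Integrating and using $\mathcal L \geq 0$ yields the bound $\int_0^T \|\Theta'(t)\|^2\,dt \leq \mathcal L(\theta_0)$, whence Cauchy-Schwarz gives the a priori estimate $\|\Theta(T)\| \leq \|\theta_0\| + \sqrt{T\mathcal L(\theta_0)}$, ruling out finite-time blow-up. Combined with Peano's theorem applied on a maximal interval, this produces the desired global $C^1$ solution. Part \ref{item-uniqueness} is then the standard Picard-Lindelöf theorem, applicable once $-\nabla \mathcal L$ is assumed locally Lipschitz.

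For the dichotomy in part \ref{item-dichotomy}, I would split into two cases according to $\liminf_{t\to\infty}\|\Theta(t)\|$. If this liminf is finite, some subsequence $\Theta(t_n)$ accumulates at a point $\theta_*\in\R^d$, and \autoref{convergence to accumulation point} — the only place definability enters — upgrades this to full convergence $\Theta(t)\to\theta_*$. Continuity of $\nabla \mathcal L$ then forces $\Theta'(t)\to -\nabla \mathcal L(\theta_*)$; were this limit nonzero, $\|\Theta'\|^2$ would be eventually bounded below by a positive constant, contradicting the square-integrability bound. Hence $\theta_*$ is a critical point and case (a) holds. Otherwise $\|\Theta(t)\|\to\infty$ and, by monotonicity, $c \coloneqq \lim_{t\to\infty}\mathcal L(\Theta(t))$ exists in $[0,\infty)$; it remains to realise $c$ as an asymptotic critical value.

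The technical heart of the argument — and the main obstacle — is producing a sequence $t_n\to\infty$ with $\|\Theta(t_n)\|\,\|\nabla \mathcal L(\Theta(t_n))\|\to 0$. I would argue by contradiction: if this product were bounded below by some $\delta>0$ for all $t\geq T_0$, then $\|\Theta'(t)\|^2 = \|\nabla \mathcal L(\Theta(t))\|^2 \geq \delta^2/\|\Theta(t)\|^2$. Plugging in the a priori growth bound $\|\Theta(t)\| \leq \|\theta_0\| + \sqrt{t\mathcal L(\theta_0)}$ yields $\|\Theta'(t)\|^2 \geq C/t$ for large $t$ and some $C>0$, which is not integrable — contradicting $\int_0^\infty \|\Theta'(t)\|^2\,dt \leq \mathcal L(\theta_0)$. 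The resulting sequence $t_n$ witnesses $c\in K_\infty(\mathcal L)$, completing case (b). Mutual exclusivity of the two cases is automatic, since convergence of $\Theta$ in $\R^d$ precludes $\|\Theta(t)\|\to\infty$. Thus the only genuinely delicate point is the interplay between the $\sqrt{t}$ growth of the trajectory and the square-integrability of its velocity in the divergent regime; everything else reduces to classical ODE theory together with the previously-proven Kurdyka-{\L}ojasiewicz-based \autoref{convergence to accumulation point}.
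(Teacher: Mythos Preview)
Your proposal is correct and follows essentially the same route as the paper's proof: the same case split on $\liminf_{t\to\infty}\|\Theta(t)\|$, the same appeal to \autoref{convergence to accumulation point} in the bounded case, and the same contradiction in Case~2 pitting the $\sqrt{t}$ growth bound against square-integrability of $\Theta'$. The only cosmetic difference is that you derive the a~priori estimate $\|\Theta(T)\|\leq\|\theta_0\|+\sqrt{T\mathcal L(\theta_0)}$ and global existence explicitly via the energy identity and Cauchy--Schwarz, whereas the paper outsources these to \cite[Proposition~2.11]{Josz2023a} and \cite[Proposition~2.54]{Gallon2022}.
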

\begin{proof}
	The existence of a $C^1$-map $\Theta\colon [0,\infty) \to \R^d$ satisfying \eqref{ode} follows for instance from \cite[Proposition 2.11]{Josz2023a}.
	The uniqueness part in \autoref{item-uniqueness} follows from the Picard--Lindel\"of theorem. 
	
	Now assume that $\mathcal L$ is moreover definable. 
	We consider two cases:
	
	\textbf{Case 1:} We have $\liminf_{t\to \infty}\|\Theta(t)\|<\infty$.
	
	In this case, there is a constant $R>0$ and a sequence $(t_n)_{n\in \N}$ in $[0,\infty)$ satisfying $\Theta(t_n)\in B_R(0)$ for all $n\in \N$ and $\lim_{n\to \infty}t_n=\infty$. 
	By compactness, a subsequence of $(\Theta(t_n))_{n\in \N}$ converges to some point $\theta_*\in B_R(0)$. 
	In particular, we have $\liminf_{t\to \infty}\|\Theta(t)-\theta_*\|=0$. 
	Therefore, \autoref{convergence to accumulation point} implies that 
	\begin{equation}\label{convergence-to-theta}
		\lim_{t\to \infty}\Theta(t)=\theta_*.
	\end{equation}
	
	To check that $\theta_*$ is a critical point, note that since $\mathcal L$ takes only non-negative values, we have
	\begin{equation}\label{thetaL2bound}
		\int_0^t \left \|\Theta'(s) \right \|^2ds=-\int_0^t \langle \Theta'(s), \nabla \mathcal L(\Theta(s))\rangle ds=\mathcal L(\Theta(0))-\mathcal L(\Theta(t)) \leq \mathcal L(\Theta(0) )
	\end{equation}
	for all $t\in [0,\infty)$. In particular, $t\mapsto \|\Theta'(t)\|$ is square-integrable. 
	In combination with \eqref{convergence-to-theta}, this proves
	\[\|\nabla \mathcal L(\theta_*)\| = \liminf_{t\to \infty}\|\nabla \mathcal L(\Theta(t))\|=\liminf_{t\to \infty}\|\Theta'(t)\|=0,\]
	so that $\theta_*$ is a critical point of $\mathcal L$. 
	
	\textbf{Case 2:} We have $\liminf_{t\to \infty}\|\Theta(t)\|=\infty$. 
	
	In this case, the limit $c\coloneqq \lim_{t\to \infty}\mathcal L(\Theta(t))$ exists since the map $t\mapsto \mathcal L(\Theta(t))$ is decreasing and bounded below. 
	It remains to be shown that $\liminf_{t\to \infty}\|\Theta(t)\|\|\nabla \mathcal L(\Theta(t))\|=0$. 
	By \cite[Proposition 2.54]{Gallon2022}, we have 
	\begin{equation}\label{thetasqrtbound}
		\|\Theta(t)\|\leq \|\Theta(0)\| + \sqrt{t \mathcal L(\Theta(0))},
	\end{equation}
	and thus 
	\begin{equation}\label{asymptotic-critical-estimate}
		\|\Theta(t)\|\|\nabla \mathcal L(\Theta(t))\|=\|\Theta(t)\|\|\Theta'(t)\|\leq \left(\|\Theta(0)\| + \sqrt{t \mathcal L(\Theta(0))}\right)\|\Theta'(t)\|,
	\end{equation}
	for every $t\in [0,\infty)$.
	
	Assume by contradiction that we have 
	\[\varepsilon\coloneqq \liminf_{t\to \infty} \left(\|\Theta(0)\| + \sqrt{t \mathcal L(\Theta(0))}\right)\|\Theta'(t)\|>0.\]
	Then there exists $T\in (0,\infty)$ such that $\inf_{t\geq T}  \left(\|\Theta(0)\| + \sqrt{t \mathcal L(\Theta(0))}\right)\| \Theta'(t)\|> \frac \varepsilon 2$. 
	Thus, we get 
	\[
	\infty=
	\int_T^\infty \left(\|\Theta(0)\| + \sqrt{t \mathcal L(\Theta(0))}\right)^{-2} dt  \leq \frac 4 {\varepsilon^2} \int_T^\infty\| \Theta'(t)\|^2dt,
	\]
	contradicting \eqref{thetaL2bound}. Thus we have $\liminf_{t\to \infty}\left(\|\Theta(0)\| + \sqrt{t \mathcal L(\Theta(0))}\right)\|\Theta'(t)\|=0$. Together with \eqref{asymptotic-critical-estimate}, this proves that $\liminf_{t\to \infty}\|\Theta(t)\|\|\nabla \mathcal L(\Theta(t))\|=0$ and concludes the proof of the lemma. 
\end{proof}

\begin{proof}[Proof of \autoref{thmA}]\label{prf-thmA}
	It is either easy to see or stated as an assumption that $\mathcal L$ is definable. 
	Moreover, it is an easy consequence of the Leibniz integral rule that $\mathcal L$ is $C^1$.
	Thus, \cite{dAcunto2000} is applicable and establishes that the set $K(\mathcal L)$ of generalized critical values of $\mathcal L$ (see \autoref{defi-criticalvalues}) is finite. 
	Moreover, $\mathcal L$ has a locally Lipschitz derivative by \autoref{cor-locally-lipschitz}. 
	Now the theorem follows from \autoref{existence of gradient flow} if we choose $\varepsilon>0$ small enough so that the open interval $(\inf_{\theta\in \R^d}\mathcal L(\theta),\inf_{\theta\in \R^d}\mathcal L(\theta)+\varepsilon)$ does not contain any generalized critical value. 
\end{proof}

	\section{Proof of \autoref{thmB}}\label{appendix-B} 
This section is dedicated to the proof of our main technical contribution \autoref{thmB}. 
This is the main part where o-minimal structures are used in a new way which cannot be found in the neural network literature. We outline the main ideas of the proof below.

\subsection*{Idea of proof}
The starting point is that the sublinearity condition \ref{item-sublinear} in \autoref{defSAD} also implies that the neural network responses $\mathcal N_\theta^\mathcal A$ are sublinear (see \autoref{lemSADneuralnetwork}). In particular, it cannot be equal as a function to a polynomial $f$ of degree at least $2$ (see \autoref{lemSADpolynomial}). 
This does not explain why $\mathcal L$ does not have any zeros yet since $\mathcal N_\theta^\mathcal A$ and $f$ could be equal on the dataset or data distribution for some $\theta$. 
The main bulk of the proof now exploits the remarkable rigidity properties of definable analytic functions to show that this cannot happen (see \autoref{lem-dimension-argument}). 
We treat the one-dimensional and multi-dimensional input cases separately. In the one-dimensional case, the combination of the identity theorem for analytic functions and the uniform finiteness theorem from o-minimal geometry imply that for any $\theta$, $\mathcal N_\theta^\mathcal A$ and $f$ cannot be equal on a sufficiently large finite dataset where the required size can be made independent of $\theta$. In the multidimensional case, the conclusion can fail for very degenerate datasets (for example if all datapoints lie on a line along which the polynomial is linear). But by combining the ideas from the one-dimensional case with dimension theory for definable sets, we show that the space of such ``bad datasets'' has strictly smaller dimension than the space of all datasets.
Therefore its complement contains an open dense subset of full Lebesgue measure. 
The use of definable dimension theory was kindly suggested to us by Floris Vermeulen and Mariana Vicaria. 

\begin{lemma}\label{lem-boundedderivative}
	Let $f\colon \R\to \R$ be a $C^1$ definable function. Then $f$ satisfies condition \ref{item-sublinear} of \autoref{defSAD} if and only if its derivative is a bounded function. 
\end{lemma}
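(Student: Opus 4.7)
The plan is to prove the two directions of the biconditional separately, with the o-minimality of $f$ entering only in the nontrivial direction.

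For the easy direction, assume $\sup_{x \in \R} |f'(x)| = M < \infty$. Then the mean value theorem yields $|f(y)| \leq |f(0)| + M|y|$ for every $y \in \R$, so for each fixed $x \in \R$ one has $|f(tx)|/t \leq |f(0)|/t + M|x|$, whose $\limsup$ as $t \to \infty$ is at most $M|x| < \infty$. This is exactly condition \ref{item-sublinear}.

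For the converse, I assume $f$ satisfies \ref{item-sublinear} and argue by contradiction: suppose $f'$ is unbounded on $\R$. Since $f$ is $C^1$, $f'$ is continuous; since $f$ is definable, so is $f'$ (its graph is first-order definable from that of $f$ via the $\varepsilon$--$\delta$ formulation of the derivative). Applying the monotonicity theorem for o-minimal structures to $f'$, I obtain $R>0$ such that $f'$ is either constant or strictly monotone on each of $(R, \infty)$ and $(-\infty, -R)$. Hence the one-sided limits $L_+ \coloneqq \lim_{x \to \infty} f'(x)$ and $L_- \coloneqq \lim_{x \to -\infty} f'(x)$ exist in $\R \cup \{\pm\infty\}$. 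If both $L_\pm$ were finite, then $f'$ would be bounded on $\R \setminus [-R, R]$ and, by continuity on the compact set $[-R, R]$, on all of $\R$, contrary to the standing assumption. So at least one of $L_\pm$ equals $\pm\infty$.

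Without loss of generality suppose $L_+ = +\infty$; the three symmetric cases ($L_+ = -\infty$, and $L_- = \pm\infty$) are handled identically, the latter two via condition \ref{item-sublinear} applied at $x = -1$ rather than $x = 1$. Then for every $K > 0$ there exists $x_0$ with $f'(s) \geq K$ for all $s \geq x_0$, so integrating gives $f(t) \geq f(x_0) + K(t - x_0)$ for $t \geq x_0$, whence $\liminf_{t \to \infty} f(t)/t \geq K$; as $K$ was arbitrary, $\limsup_{t \to \infty} |f(t)|/t = \infty$, contradicting \ref{item-sublinear} at $x = 1$. The main obstacle the proof has to overcome is that, without definability, an unbounded $f'$ could in principle oscillate and still leave $f$ sublinear; the monotonicity theorem is precisely what rules out such pathological asymptotic behaviour and forces $f'$ to diverge along a ray, which in turn forces superlinear growth of $f$.
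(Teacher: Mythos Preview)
Your proof is correct and follows essentially the same approach as the paper: both use the mean value theorem (or equivalently the fundamental theorem of calculus) for the easy direction, and for the converse both invoke definability of $f'$ and the monotonicity theorem to force $f'$ to be eventually monotone on a ray, then integrate the resulting lower bound to derive superlinear growth contradicting \ref{item-sublinear}. Your treatment of the case distinction via the existence of the one-sided limits $L_\pm$ is slightly more explicit than the paper's, but the substance is identical.
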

\begin{proof}
	Assume first that $C\coloneqq\sup_{t\in \R}|f'(t)|<\infty$ and fix $x\in \R$. Then the fundamental theorem of calculus implies for any $t\in (0,\infty)$ that
	\[\frac{|f(tx)|}{t}\leq \frac 1 t \left(|f(0)|+\int_0^1|f'(stx)||tx|ds\right) \leq \frac{|f(0)|}{t} + C|x| \xrightarrow{t\to \infty} C|x|<\infty,\]
	proving condition \ref{item-sublinear}. 
	We prove the converse and assume $f'$ is unbounded. Since $f'$ is definable \cite[Lemma 6.1]{Coste2000}, it is monotone outside a compact interval $[-R,R]$ by the monotonicity theorem \cite[Theorem 2.1]{Coste2000}. Without loss of generality, we may assume that $f'$ is increasing and unbounded on $[R,\infty)$. The cases that $f$ is decreasing and unbounded on $[R,\infty)$, increasing and unbounded on $(-\infty,R]$, or decreasing and unbounded on $(-\infty,R]$ are analogous. 
	Fix a constant $K>0$. Since $f'$ is increasing and unbounded on $[R,\infty)$, there is a constant $R'>R$ such that $f'(t)> K$ for all $t>R'$. Thus for all $t\in [R',\infty)$, we have 
	\begin{align*}
		 \limsup_{t\to \infty}\frac{f(t\cdot 1)}{t}&=\limsup_{t\to \infty}\left|\frac{f(R')}{t} + \frac 1 t \int_{R'}^t f'(s)ds\right|\\
		 &\geq\limsup_{t\to \infty}\left(\frac{-|f(R')|}{t} + \frac 1 t \int_{R'}^t f'(s)ds\right)\\
		 &\geq \limsup_{t\to \infty}\frac{-|f(R')| + (t-R')K}{t} \\
		 &= K.
	\end{align*}
	Since $K>0$ was arbitrary, we conclude that $\limsup_{t\to \infty}\frac{f(t\cdot 1)}{t}=\infty$, disproving condition \ref{item-sublinear}. 	
\end{proof}

\begin{lemma}\label{lemSADneuralnetwork}
	Let $\mathcal A = (d_0,\dotsc,d_k,\psi)$ be a neural network architecture and $\theta\in \R^{d(\mathcal A)}$. If the activation $\psi\colon \R\to \R$ is SAD, then the response $\mathcal N_\theta^\mathcal A\colon \R^{d_0}\to \R^{d_k}$ is SAD, too. 
\end{lemma}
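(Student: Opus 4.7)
The plan is to verify the three defining conditions \ref{item-sublinear}, \ref{item-analytic}, \ref{item-definable} of SAD in turn for the response $\mathcal N_\theta^\mathcal A$. Recall that $\mathcal N_\theta^\mathcal A = f_k\circ\dotsb\circ f_1$, where the first $k-1$ layers have the form $f_i = \psi^{(d_i)}\circ A_i$ with $A_i(x) = W_i x + b_i$ affine, and the last layer $f_k = A_k$ is affine.

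For analyticity \ref{item-analytic}: affine maps are analytic, $\psi$ is analytic by assumption, and the amplification $\psi^{(d_i)}$ is analytic since its components are. Compositions and Cartesian products of analytic maps are analytic, so $\mathcal N_\theta^\mathcal A$ is analytic.

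For definability \ref{item-definable}: fix an o-minimal structure $\mathcal S$ in which $\psi$ is definable (this is where condition \ref{item-definable} for $\psi$ enters). Polynomials (hence affine maps) are definable in every structure by \autoref{defi-ominimal}, and the amplification $\psi^{(d_i)}$ is definable in $\mathcal S$ since its graph is cut out from $\mathbb R^{2d_i}$ by a finite conjunction of conditions involving the graph of $\psi$. Compositions of functions definable in the same o-minimal structure are again definable (their graphs are obtained by intersections and projections as in \autoref{defi-ominimal}). Therefore $\mathcal N_\theta^\mathcal A$ is definable in $\mathcal S$.

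For sublinearity \ref{item-sublinear}: I expect this to be the main step, since it is the only non-formal one. The idea is to upgrade the pointwise sublinearity of $\psi$ to a global Lipschitz bound, then propagate it through the composition. Since $\psi$ is $C^1$ (being analytic) and definable, \autoref{lem-boundedderivative} applied to $\psi$ yields a constant $C>0$ with $|\psi'(s)|\leq C$ for all $s\in\mathbb R$. By the mean value theorem, $\psi$ is globally $C$-Lipschitz, and hence $\psi^{(d_i)}$ is globally Lipschitz with respect to the Euclidean norm (with the same constant $C$). The affine maps $A_i$ are globally Lipschitz with constant $\|W_i\|$. Since the composition of globally Lipschitz maps is globally Lipschitz (with constant equal to the product), $\mathcal N_\theta^\mathcal A$ is globally Lipschitz. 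If $L$ denotes such a Lipschitz constant, then for every $x\in\mathbb R^{d_0}$ and $t>0$,
\[
\frac{\|\mathcal N_\theta^\mathcal A(tx)\|}{t} \leq \frac{\|\mathcal N_\theta^\mathcal A(0)\|}{t} + L\|x\|,
\]
so $\limsup_{t\to\infty}\|\mathcal N_\theta^\mathcal A(tx)\|/t \leq L\|x\|<\infty$, as required. This completes the verification of the three SAD properties.
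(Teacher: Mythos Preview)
Your proof is correct and in fact more streamlined than the paper's own argument. For \ref{item-analytic} and \ref{item-definable} both proofs dismiss these as routine. For \ref{item-sublinear}, however, the approaches differ: the paper argues by induction on the depth $k$, reducing to the case $d_k=1$ and analyzing the one-variable functions $t\mapsto \mathcal N_{\theta_i}^{\mathcal A'}(tx)$ layer by layer, invoking \autoref{lem-boundedderivative} in \emph{both} directions (sublinearity $\Rightarrow$ bounded derivative for the inner part, then bounded derivative $\Rightarrow$ sublinearity after composing with $\psi$). You instead apply \autoref{lem-boundedderivative} \emph{once}, to $\psi$ itself, to extract a global Lipschitz constant, and then propagate Lipschitz continuity through the composition directly. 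This yields the stronger conclusion that $\mathcal N_\theta^{\mathcal A}$ is globally Lipschitz (not merely sublinear along rays), avoids the induction entirely, and uses only the easy direction of \autoref{lem-boundedderivative}. The paper's approach has the minor conceptual advantage of staying within the one-variable framework where \autoref{lem-boundedderivative} lives, but your argument is shorter and gives more.
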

\begin{proof}%
	It is easy to see that $\mathcal N_\theta^\mathcal A$ satisfies conditions \ref{item-analytic} and \ref{item-definable} of \autoref{defSAD} for all $\theta\in \R^d$. 
	It remains to verify \ref{item-sublinear}. Since condition \ref{item-sublinear} can be checked on each component of $\mathcal N_\theta^\mathcal A$ we may assume without loss of generality that $\mathcal A = (d_0,\dotsc,d_k,\psi)$ has only $d_k=1$ output neuron. 
	
	We prove that $\mathcal N_\theta^\mathcal A$ satisfies condition \ref{item-sublinear} by induction on $k$. Assume first that $k=1$. Then $\mathcal N_\theta^\mathcal A$ is of the form 
	\[\mathcal N_\theta^\mathcal A(x)= w_1 x_1 + \dotsb + w_{d_0} x_{d_0} + b\]
	for $\theta= (w_1,\dotsc,w_n,b)$. Then, $\mathcal N_\theta^\mathcal A$ is affine linear and thus satisfies condition \ref{item-sublinear}.
	
	Assume now that we have proven the statement for $k-1$ and fix $\theta\in \R^d$ and $x\in \R^{d_0}$.
	We define a smaller architecture $\mathcal A' = (d_0,\dotsc,d_{k-2},1,\psi)$ with only $k-1$ layers by deleting the $(k-1)$-th layer.
	Note that by rearranging the components of $\theta\in \R^d$ as $\theta=(\theta_1,\dotsc,\theta_{d_{k-1}},w_1,\dotsc,w_{d_{k-1}},b)$, for appropriate $w_1,\dotsc,w_{d_{k-1}},b\in \R$ and $\theta_1,\dotsc,\theta_{d_{k-1}}\in \R^{(d-1)/d_{k-1}-1}$, we can write 
	\begin{equation}\label{decompose-linear}
		\mathcal N_\theta^\mathcal A(tx)= \sum_{i=1}^{d_{k-1}}w_i \psi(\mathcal N_{\theta_i}^{\mathcal A'}(tx))+b
	\end{equation}
	for all $t\in \R$. 
	By the induction hypothesis, the functions $t\mapsto \mathcal N_{\theta_i}^{\mathcal A'}(tx)$ satisfiy condition \ref{item-sublinear} for every $i=0,\dotsc,d_{k-1}$. 
	In particular, they have bounded derivatives by \autoref{lem-boundedderivative}. By another application of \autoref{lem-boundedderivative} and the chain rule, it follows that the functions $t\mapsto w_i\psi(\mathcal N_{\theta_i}^{\mathcal A'}(tx))$ for $i=0,\dotsc,d_{k-1}$ satisfy condition \ref{item-sublinear}, too.
	By linearity and \autoref{lem-boundedderivative} applied to \eqref{decompose-linear}, it follows that the function $t\mapsto \mathcal N_\theta^\mathcal A (tx)$ satisfies condition \ref{item-sublinear}. 
	Since $x\in \R^{d_0}$ was arbitrary, we conclude that $\mathcal N^\mathcal A_\theta$ satisfies condition \ref{item-sublinear}.
\end{proof}

\begin{lemma}\label{lemSADpolynomial}
	Let $f\colon \R^m\to \R^r$ be a polynomial. Then $f$ is SAD if and only if $f$ is affine linear. 
\end{lemma}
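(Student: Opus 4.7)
The plan is to argue both directions. The easy direction is that an affine linear map $f(x) = Ax + b$ satisfies $\|f(tx)\|/t = \|Ax + b/t\| \to \|Ax\|$ as $t \to \infty$, which verifies condition \ref{item-sublinear}; analyticity and semialgebraicity (hence definability in the o-minimal structure of semialgebraic sets) of polynomials are standard, so every affine linear polynomial is SAD.

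For the nontrivial converse, I would proceed by contrapositive: assume $\deg(f) = n \geq 2$ and exhibit an $x \in \R^m$ violating \ref{item-sublinear}. The key observation is that since condition \ref{item-sublinear} can be tested componentwise (an $\R^r$-valued function is sublinear iff each component is, by equivalence of norms), we may pick a component $f_j$ with $\deg(f_j) = n$. Writing $f_j = \sum_{k=0}^n f_j^{(k)}$ with $f_j^{(k)}$ the homogeneous part of degree $k$, the top part $f_j^{(n)}$ is a nonzero homogeneous polynomial and hence does not vanish identically on $\R^m$; pick any $x_0 \in \R^m$ with $f_j^{(n)}(x_0) \neq 0$. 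Then
\begin{equation*}
	\frac{f_j(tx_0)}{t} = t^{n-1} f_j^{(n)}(x_0) + \sum_{k=0}^{n-1} t^{k-1} f_j^{(k)}(x_0),
\end{equation*}
and since $n - 1 \geq 1$, the right hand side tends to $\pm \infty$ in absolute value as $t \to \infty$. Consequently $\limsup_{t\to\infty}\|f(tx_0)\|/t = \infty$, contradicting \ref{item-sublinear}.

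There is no real obstacle here; the argument reduces the question to the behaviour of a single homogeneous polynomial along a ray, which is handled by elementary calculus. The only point to be a little careful about is the reduction to a single scalar component and choosing the right ray $x_0$ on which the leading homogeneous part does not vanish, but this is immediate since a nonzero polynomial on $\R^m$ cannot be identically zero.
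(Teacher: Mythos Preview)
Your proof is correct and follows essentially the same approach as the paper: reduce to a single scalar component, decompose into homogeneous parts, choose a ray $x_0$ on which the top homogeneous part does not vanish, and observe that the resulting one-variable polynomial grows like $t^{n-1}$ along that ray. The only cosmetic difference is that the paper groups the lower-order terms into a single remainder $p_1$ and uses a triangle-inequality estimate, whereas you expand all homogeneous parts explicitly; both arrive at the same conclusion by the same mechanism.
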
%

\begin{proof}%
	Note that by definition, polynomials are analytic and definable. 
	Moreover, every linear function $p\colon \R^m\to \R^r$ satisfies condition \ref{item-sublinear} in \autoref{defSAD} since we have $\frac{\|p(tx)\|}{t}=\|p(x)\|$ for all $x \in \R^m$ and $t\in (0,\infty)$. 
	Conversely, let $p\colon \R^m\to \R^r$ be polynomial of degree at least $2$ and write $p(x)=(p_1(x),\dotsc,p_r(x))$ for $x\in \R^m$ and polynomials $p_1,\dotsc,p_r\colon \R^m\to \R$. 
	To prove that $p$ does not satisfy condition \ref{item-sublinear}, it suffices to prove that one of the components $p_1,\dotsc,p_r$ does not satisfy condition \ref{item-sublinear}. 
	Thus, we may without loss of generality assume that $r=1$. 
	We write $p$ as $p(x)=p_H(x)+p_L(x)$ where $p_H$ is a homogeneous polynomial of degree $k\geq 2$ and $p_L$ is a polynomial of degree strictly less than $k$. Fix $x_0\in \R^m$ such that $\|x_0\|=1$ and $p_H(x_0)\not=0$. 
	Since $p_L$ has degree strictly less than $k$, the limit $\alpha\coloneqq \lim_{t\to \infty} \frac{|p_L(tx_0)|}{t^{k-1}}\in \R$ exists. 
	We have 
	\begin{align*}
		\limsup_{t\to \infty} \frac{|p(tx_0)|}{t} 	&\geq \limsup_{t\to \infty} \frac{|p(tx_0)|}{t^{k-1}}\\
		&\geq \limsup_{t\to \infty} \left(\frac{|p_H(tx_0)|}{t^{k-1}}-\frac{|p_L(tx_0)|}{t^{k-1}}\right) \\
		&\geq \limsup_{t\to \infty} \frac{t^k|p_H(x_0)|}{t^{k-1}}-\alpha\\
		&=\infty.
	\end{align*}
\end{proof}

The next lemma helps us to detect non-representability of polynomials on a finite dataset. We are indebted to Floris Vermeulen and Mariana Vicaria for explaining to us the proof in the case $m>1$. 
\begin{lemma}\label{lem-dimension-argument}
	Let $f\colon \R^m\times \R^d\to \R$ be a definable analytic function such that for each $\theta\in \R^d$, the function $x\mapsto f(\theta,x)$ is not identically zero. Then there is an integer $N\in \N$ such that the interior of the set 
	\[\mathcal D_N \coloneqq \{(x_1,\dotsc,x_N)\mid \forall \theta\in \R^d\colon \exists i=1,\dotsc,N\colon f(\theta,x_i)\not=0\}\subseteq (\R^m)^N\]
	is conull and dense in $(\R^m)^N$. 
	If we moreover assume that $m=1$, then $N$ can be chosen such that $\mathcal D_N$ contains all tuples $(x_1,\dotsc,x_N)$ for which $x_1,\dotsc,x_N$ are pairwise distinct. 
\end{lemma}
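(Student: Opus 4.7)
The plan is to express $\mathcal D_N^c$ as the image of a definable set and use o-minimal dimension theory to bound its size. Concretely, I would introduce the definable set
\[
E_N = \{(x_1,\dots,x_N,\theta)\in (\R^m)^N\times\R^d : f(\theta,x_i)=0 \text{ for all } i=1,\dots,N\},
\]
and observe that $\mathcal D_N^c$ is precisely the coordinate projection of $E_N$ onto $(\R^m)^N$. For each fixed $\theta$, the set $Z_\theta\coloneqq\{x\in\R^m : f(\theta,x)=0\}$ is definable and, by the identity theorem applied to the analytic function $f(\theta,\cdot)$, which by hypothesis is not identically zero, has empty interior in $\R^m$. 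Hence standard o-minimal dimension theory \cite{VandenDries1998} yields $\dim Z_\theta\le m-1$. Applying the fiber dimension theorem to the projection $E_N\to\R^d$, whose fiber over $\theta$ is $Z_\theta^N\times\{\theta\}$ and so has dimension at most $N(m-1)$, I obtain $\dim E_N\le d+N(m-1)$, and therefore $\dim\mathcal D_N^c\le d+N(m-1)$ since coordinate projections do not increase definable dimension.

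Choosing $N\ge d+1$ makes this bound strictly smaller than $Nm$, so $\mathcal D_N^c$ is a definable subset of $(\R^m)^N$ of dimension strictly less than the ambient one; the same holds for its closure $\overline{\mathcal D_N^c}$, since o-minimal dimension is invariant under closure. Standard results from o-minimal geometry, obtained via cell decomposition, then imply that any such definable set has empty topological interior and zero Lebesgue measure. Consequently, the interior of $\mathcal D_N$, which coincides with $(\R^m)^N\setminus\overline{\mathcal D_N^c}$, is open, dense, and conull, as required for the main statement.

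For the refinement in the case $m=1$, I would observe that each $Z_\theta\subseteq\R$ is definable and, by analyticity, discrete, hence finite by the o-minimal classification of definable subsets of $\R$. The uniform finiteness theorem \cite{Coste2000} applied to the definable family $\{Z_\theta\}_{\theta\in\R^d}$ then provides an integer $M$ with $|Z_\theta|\le M$ for every $\theta$, and taking $N\ge\max(d+1,M+1)$ handles both regimes simultaneously: any tuple of $N$ pairwise distinct points cannot fit inside any single $Z_\theta$, and so lies in $\mathcal D_N$. The argument is structurally clean because both ingredients — the fiber dimension theorem and the equivalence between ``dimension strictly less than the ambient one'' and ``empty interior together with Lebesgue measure zero'' for definable sets — are entirely standard. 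The only real point to get right is the choice of the parameterised family $E_N$ and the bookkeeping to ensure a single $N$ simultaneously realises conullness, density, and (for $m=1$) the pairwise-distinct refinement; no deeper obstacle is anticipated.
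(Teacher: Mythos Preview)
The proposal is correct and follows essentially the same approach as the paper: both bound the dimension of $\mathcal D_N^c$ via the fibered description $\bigcup_\theta Z_\theta^N$, use analyticity and the identity theorem to get $\dim Z_\theta\le m-1$, apply o-minimal fiber dimension theory to obtain $\dim\mathcal D_N^c\le d+N(m-1)<Nm$ for $N\ge d+1$, pass to the closure, and invoke uniform finiteness for the $m=1$ refinement. The only cosmetic difference is that you package the fibered set as $E_N\subseteq(\R^m)^N\times\R^d$ and project, whereas the paper writes the union directly; and for $m=1$ you take $N\ge\max(d+1,M+1)$, while the paper simply takes $N=M+1$ since the pairwise-distinct condition already forces conullness and density.
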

\begin{proof}
	Fix $N\in \N$ such that $d + (m-1) N < mN$.
	Recall from \cite[Chapter 4]{VandenDries1998} that every definable set $X\subseteq \R^k$ for $k\in \N$ has a well defined \emph{dimension} $\dim(X)\in \{-\infty, 0, 1,\dotsc,k\}$ and that $\dim(X)=k$ if and only if $X$ has non-empty interior if and only if $X$ has non-zero Lebesgue measure \cite[Theorem 2.5]{Berarducci2004}. 
	Note that the set 
	\begin{equation*}
		X\coloneqq \{(\theta,x)\in \R^d\times \R^m \mid f(\theta,x) =0 \}\subseteq \R^d\times \R^m
	\end{equation*}
	as well as its slices
	\[X_\theta\coloneqq \{x\in \R^m\mid f(\theta,x)=0\}\subseteq \R^m\]
	for all $\theta\in \R^d$ are definable.
	Since each of the functions $f(\theta,-)$ for $\theta\in \R^d$ is analytic and not identically zero, the identity theorem implies that each $X_\theta$ for $\theta\in \R^d$ has empty interior. In particular, we have $\dim(X_\theta)\leq m-1$ for each $\theta\in \R^d$.
	Note moreover that we have
	\begin{equation}\label{decomposeD}
		(\R^m)^N\setminus \mathcal D_N=\bigcup_{\theta\in \R^d}X_\theta^N \subseteq (\R^m)^N.
	\end{equation}
	The combination of Proposition 1.3 (iii), Proposition 1.5 and Corollary 1.6 (iii) in \cite[Chapter 4]{VandenDries1998} applied to \eqref{decomposeD} thus imply the inequality
	\[\dim((\R^m)^N\setminus \mathcal D_N) \leq d + (m-1) N < mN=\dim((\R^m)^N).\]
	Using Theorem 1.8 of \cite[Chapter 4]{VandenDries1998}, we even get $\dim\left(\overline{(\R^m)^N\setminus \mathcal D_N}\right)=\dim((\R^m)^N\setminus \mathcal D_N) <\dim((\R^m)^N)$. 
	In particular, $\overline{(\R^m)^N\setminus \mathcal D_N}\subseteq (\R^m)^N$ is a null set with empty interior. 
	In other words: $\mathcal D_N^\circ\subseteq (\R^n)^N$ is conull and dense. 
	
	For the second part of the lemma, assume moreover that $m=1$. Then each of the sets $X_\theta$ for $\theta\in \R^d$ is finite. The uniform finiteness theorem \cite[Theorem 4.9]{Coste2000} implies that the sets $(X_\theta)_{\theta\in \R^d}$ have uniformly finite cardinality, i.e. $\sup_{\theta\in \R^d}|X_\theta|<\infty$. The lemma then follows by picking $N=\sup_{\theta\in \R^d}|X_\theta| +1$.
\end{proof}

\begin{proof}[Proof of \autoref{thmB}]\label{prf-thmB}
	By \autoref{lemSADneuralnetwork} and \autoref{lemSADpolynomial}, for each $\theta\in \R^{d(\mathcal A)}$, the analytic definable function 
	\begin{equation}\label{eq-not-identically-zero}
		\R^{d_0}\to \R, \quad x\mapsto \mathcal N_\theta(x)-f(x)
	\end{equation}
	is not identically zero. 
    If $\mu$ is as in \autoref{item-thmb-empirical} of \autoref{thmB}, the theorem now follows from an application of \autoref{lem-dimension-argument}.
    Note that the identity theorem implies that \eqref{eq-not-identically-zero} is non-zero outside of a null set. Since $\ell$ is a loss function, the function 
    \[\R^{d_0}\to \R, \quad x\mapsto \ell(\mathcal N_\theta(x),f(x))\]
    is non-zero outside of a null set as well. This proves the theorem in the case that $\mu$ is as in \autoref{item-thmb-continuous} of \autoref{thmB}. 
    If $\mu$ is a convex combination of the two, the proof follows from the combination of the two arguments. 
\end{proof}

	\section{Proof of \autoref{thmC}}\label{appendix-C}
This section is devoted to the proof of \autoref{thmC} which follows classical proofs of the unviersal approximation theorem \cite{Cybenko1989, Funahashi1989, Hornik1989, Leshno1993}. For the particular case of $\tanh$ activations, the proof can be found in \cite{DeRyck2021}. We do not claim originality and include the proof mainly for convenience of the reader. Before going into details, we outline the main ideas of the proof below. 

\subsection*{Idea of proof}
The proof starts from the special case of fitting linear polynomials in one variable with one hidden layer and then successively builds up generality (see \autoref{polynomialapprox} and \autoref{fig-structure}). The starting point is that if $\psi$ is an analytic activation function and $x_0\in \R$, then one can approximate a linear function with slope $\psi'(x_0)$ by neural networks with one hidden neuron using the formula $x \mapsto n \psi(x/n +x_0)$. Similarly, one can inductively approximate the $k$-th Taylor polynomial of $\psi$ using the formula $x\mapsto n^k \psi(x/n+x_0)$. By adding hidden neurons and compensating the lower order terms of the Taylor polynomial with previous approximations, one can thus approximate polynomials of degree $k$ using neural networks with one hidden layer and $k$ neurons. In order upgrade this to the multivariate case, we employ a linear algebra argument (see \autoref{lem-polynomial-dimension}) that allows us to express arbitrary multivariant polynomials as linear combinations of polynomials of the form $p(x_1,\dotsc,x_n)=(a_1 x_1 + \dotsb + a_n x_n)^k$. To extend our result to multiple hidden layers, we let the other layers approximate the identity and apply the Arzela-Ascoli theorem (see \autoref{arzelaascoli}) to achieve compatibility of the various approximations. Along the way we carefully keep track of the required number of hidden neurons. 

\begin{lemma}\label{lem-polynomial-dimension}
	Fix integers $n\geq 1$ and $m\geq 0$. Denote by $\R[X_0,\dotsc,X_m]_{n}\subseteq \R[X_0,\dotsc,X_m]$ the real vector space of homogeneous polynomials of degree $n$ in $m+1$ variables. Then, we have 
	$\dim \R[X_0,\dotsc,X_m]_{n}=\binom{n+m}{m}$
	and moreover
		\[\R[X_0,\dotsc,X_m]_{n}= \mathrm{span}\{(X_0 +a_1X_1+\dotsb +a_mX_m)^n\mid a_1,\dotsc,a_m\in \R\}.\]
\end{lemma}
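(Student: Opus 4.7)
The plan is to verify the two claims separately: the dimension formula follows from a standard monomial count, while the spanning statement will be obtained by a duality argument using the multinomial theorem.

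For the dimension claim, I would simply note that the monomials $X_0^{\alpha_0}X_1^{\alpha_1}\cdots X_m^{\alpha_m}$ with $\alpha_0+\dotsb+\alpha_m=n$ form a basis of $\R[X_0,\dotsc,X_m]_{n}$, and that the number of such multi-indices is $\binom{n+m}{m}$ by a stars-and-bars count.

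For the spanning claim, let $V\subseteq \R[X_0,\dotsc,X_m]_{n}$ denote the right-hand side. I would argue by duality: I will show that every linear functional $\phi\colon \R[X_0,\dotsc,X_m]_{n}\to \R$ that vanishes on $V$ is identically zero, which forces $V=\R[X_0,\dotsc,X_m]_{n}$. Writing $\phi(X_0^{\alpha_0}\cdots X_m^{\alpha_m})=c_\alpha$ for each multi-index $\alpha$ with $|\alpha|=n$, the multinomial theorem gives
\[
\phi\bigl((X_0+a_1X_1+\dotsb+a_mX_m)^n\bigr)=\sum_{|\alpha|=n}\binom{n}{\alpha}\,c_\alpha\,a_1^{\alpha_1}\cdots a_m^{\alpha_m}.
\]
The assumption that $\phi$ vanishes on $V$ says that the right-hand side is zero for every $(a_1,\dotsc,a_m)\in\R^m$. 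Since the monomials $a_1^{\alpha_1}\cdots a_m^{\alpha_m}$ with $\alpha_1+\dotsb+\alpha_m\leq n$ are linearly independent as functions on $\R^m$, every coefficient $\binom{n}{\alpha}c_\alpha$ vanishes, hence $c_\alpha=0$ for all $\alpha$ with $|\alpha|=n$, so $\phi=0$.

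There is no serious obstacle here — the argument is essentially polarization of powers, rephrased through the multinomial theorem. The only subtlety worth being careful about is bookkeeping of the multi-indices: the polynomial in $(a_1,\dotsc,a_m)$ extracted from $(X_0+a_1X_1+\dotsb+a_mX_m)^n$ has degree at most $n$ (not $n$ in each variable), so one must be sure to invoke the independence of all monomials of total degree $\leq n$, which still gives exactly $\binom{n+m}{m}$ constraints — matching the dimension established in the first part and providing a sanity check that the duality argument is tight.
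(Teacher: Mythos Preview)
Your proof is correct, and both the dimension count and the duality argument go through without issue. The paper proceeds differently for the spanning claim: instead of a duality argument over all $(a_1,\dotsc,a_m)\in\R^m$, it picks the explicit finite family of integer tuples $\lambda\in\{0,\dotsc,n\}^m$ with $\lambda_1+\dotsb+\lambda_m\leq n$, forms the linear map $P$ sending the $\lambda$-th basis monomial to $(X_0+\lambda_1X_1+\dotsb+\lambda_mX_m)^n$, and shows $\det P\neq 0$ by invoking a generalized Vandermonde determinant identity. Your approach is more elementary in that it avoids the Vandermonde machinery entirely, relying only on the trivial fact that distinct monomials are linearly independent as functions on $\R^m$; the paper's approach, by contrast, is constructive and actually exhibits an explicit basis of $\R[X_0,\dotsc,X_m]_n$ consisting of $n$-th powers of linear forms, which is a slightly stronger statement than the bare spanning claim.
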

To \textbf{avoid confusion about the dimensions}, we emphasize that this lemma consideres polynomials in {${m+1}$ variables} and not in $m$ variables. 
\begin{proof}[Proof of Lemma \ref{lem-polynomial-dimension}]
	We define the following index set.
		\[\mathcal S(n,m)\coloneqq\{\lambda=(\lambda_1,\dotsc,\lambda_m)\in \N^m\mid 0\leq \lambda_i \leq n, \lambda_1+\dotsb+\lambda_m\leq n\}\subseteq \{0,\dotsc,n\}^m\]
	For $\lambda\in \mathcal S(n,m)$, we furthermore write $\lambda_0=n-\sum_{i=1}^m\lambda_i$. 
	A basis of $\R[X_0,\dotsc,X_m]_n$ is given by 
	\begin{equation}\label{basis}
		\left(\binom{n}{\lambda_0,\dotsc, \lambda_m}X_0^{\lambda_0}\dotsb X_m^{\lambda_m}\right)_{\lambda\in \mathcal S(n,m)},	
	\end{equation} 
	where $\binom{n}{\lambda_0,\dotsc, \lambda_m}=\frac{n!}{\lambda_0!\dotsb\lambda_m!}$ denotes the multinomial coefficient. 
	It is easy to see from a stars and bars argument that $\dim \R[X_0,\dotsc,X_m]_{n}=|\mathcal S(n,m)|=\binom{n+m}{m}$.
	
	To prove the second claim of the lemma, we define a linear map $P\colon \R[X_0,\dotsc,X_m]_{n}\to \R[X_0,\dotsc,X_m]_{n}$ on basis elements by the formula
	\begin{align*}
		P\left(\binom{n}{\lambda_0,\dotsc, \lambda_m}X_0^{\lambda_0}\dotsb X_m^{\lambda_m}\right)&\coloneqq\left( X_0+\lambda_1X_1+\dotsc+\lambda_mX^m\right)^n\\
		&=\sum_{\mu\in \mathcal S(n,m)}\binom{n}{\mu_0,\dotsc, \mu_m}\lambda_1^{\mu_1}\dotsb\lambda_m^{\mu_m}X_0^{\mu_0}\dotsb X_m^{\mu_m}.
	\end{align*}
	We introduce a relation $\prec$ on $\mathcal S(n,m)$ by defining that
	$\lambda \prec \mu$ if and only if $\lambda_r<\mu_r$ for some $r\in\{1,\dotsc,m\}$ and $\lambda_i=\mu_i$ for all $i\in \{1,\dotsc,m\}\setminus \{r\}$. 
	In this case, we write $\Delta(\mu,\lambda)= \mu_r-\lambda_r$.
	Using the basis \eqref{basis} and the generalized Vandermonde determinant formula \cite{Buck1992}, we can calculate the determinant of $P$ as 
		\[\det(P)=\det\Big(\left(\lambda_1^{\mu_1}\dotsb\lambda_m^{\mu_m}\right)_{\lambda,\mu\in \mathcal S(n,m)}\Big)=\prod_{\underset{\lambda\prec\mu}{\lambda,\mu\in \mathcal S(n,m)}}\Delta(\mu,\lambda)\not=0.\]
	In particular, $P$ is surjective and we have 
	\[\R[X_0,\dotsc,X_m]_n\subseteq P(\R[X_0,\dotsc,X_m]_n)\subseteq \mathrm{span}\{(X_0 +a_1X_1+\dotsb +a_mX_m)^n\mid a_1,\dotsc,a_m\in \R\}.\]
\end{proof}

\begin{defi}[Uniform Norm]
	For a subset $X\subseteq \R^n$ and a function $f\colon \R^n\to \R$, we write 
	\[\|f\|_X\coloneqq \sup_{x\in X}|f(x)|.\]
	For a sequence $(f_j)_{j\in \N}$ of functions $\R^n\to \R$, we say that \emph{$(f_j)_{j\in \N}$ converges to $f$ uniformly on compact sets}, if for every compact set $K\subseteq \R^n$, we have 
	\[\|f_j-f\|_K\xrightarrow{j\to \infty}0.\]
\end{defi}

For approximating the identity with redundant hidden layers, we will also need the following elementary application of the Arzel\`a--Ascoli theorem.
\begin{lemma}\label{arzelaascoli}
	Let $n,m,k\in \N$ and let $f_j,f_\infty\colon\R^m\to \R^k$ and $g_j,g_\infty\colon \R^n\to\R^m$ for all $j\in \N$ be continuous functions such that $f_j\xrightarrow{j\to \infty} f_\infty$ and $g_j\xrightarrow{j\to \infty} g_\infty$ uniformly on compact sets. Then we have $f_j\circ g_j \xrightarrow{j\to \infty} f_\infty\circ g_\infty$ uniformly on compact sets.
\end{lemma}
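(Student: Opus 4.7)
The plan is a standard triangle-inequality argument combined with uniform continuity, where the only nontrivial step is arranging that all the evaluations of $f_j$ and $f_\infty$ occur inside a single compact subset of $\R^m$.

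First I fix a compact set $K \subseteq \R^n$ and reduce everything to this $K$. Since $g_\infty$ is continuous, $g_\infty(K)$ is compact, and since $g_j \to g_\infty$ uniformly on $K$, there exists $R>0$ and an index $j_0$ such that $g_j(K) \subseteq \overline{B_R(0)}$ for all $j \geq j_0$, and also $g_\infty(K) \subseteq \overline{B_R(0)}$. I set $K' \coloneqq \overline{B_R(0)} \subseteq \R^m$, which is compact, so that both $g_j(x)$ (for $j \geq j_0$) and $g_\infty(x)$ lie in $K'$ for every $x \in K$.

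Next I split the error using the triangle inequality:
\[
\|f_j(g_j(x)) - f_\infty(g_\infty(x))\| \leq \|f_j(g_j(x)) - f_\infty(g_j(x))\| + \|f_\infty(g_j(x)) - f_\infty(g_\infty(x))\|.
\]
For the first term, $g_j(x) \in K'$ for $j \geq j_0$, so it is bounded by $\|f_j - f_\infty\|_{K'}$, which tends to $0$ by the uniform convergence of $f_j$ to $f_\infty$ on the compact set $K'$. For the second term, $f_\infty$ is continuous on the compact set $K'$, hence uniformly continuous there; given $\varepsilon>0$, choose $\delta>0$ so that $y_1,y_2 \in K'$ with $\|y_1-y_2\|<\delta$ implies $\|f_\infty(y_1)-f_\infty(y_2)\|<\varepsilon$. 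By uniform convergence of $g_j$ to $g_\infty$ on $K$, there exists $j_1 \geq j_0$ such that $\|g_j(x) - g_\infty(x)\| < \delta$ for all $j \geq j_1$ and all $x \in K$, whence the second term is at most $\varepsilon$ uniformly in $x \in K$.

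Taking the supremum over $x \in K$ and letting $j \to \infty$ yields $\|f_j \circ g_j - f_\infty \circ g_\infty\|_K \to 0$, which is the desired uniform convergence on compact sets. The only mild subtlety is the first step of trapping all the images $g_j(K)$ inside a common compact set $K'$; once this is in place, the proof is the standard two-term triangle inequality bound.
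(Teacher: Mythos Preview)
Your proof is correct and follows the same overall strategy as the paper: trap the images $g_j(K)$ in a common compact set, then use a two-term triangle-inequality split together with uniform continuity. The one genuine difference is the choice of intermediate point. The paper splits through $f_j(g_\infty(x))$, which forces it to control $\|f_j(g_j(x))-f_j(g_\infty(x))\|$ and hence to invoke the Arzel\`a--Ascoli theorem for the \emph{equicontinuity} of the whole family $\{f_j\}$ on $B_R(0)$. You instead split through $f_\infty(g_j(x))$, so you only ever need uniform continuity of the single limit function $f_\infty$ on $K'$, which is immediate from continuity on a compact set. Your route is therefore slightly more elementary: it delivers the same conclusion without any appeal to Arzel\`a--Ascoli.
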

\begin{proof}
	Fix a compact subset $K\subseteq \R^n$ and $\varepsilon>0$. By the Arzel\`a--Ascoli theorem, there is an $R>0$ satisfying
	\[\bigcup_{j\in \N\cup\{\infty\}}g_j(K)\subseteq B_R(0).\] 
	Again by the Arzel\`a--Ascoli theorem, there is $\delta>0$ such that for all $j\in \N\cup \{\infty\}$ and $x,y\in B_R(0)$ with $|x-y|<\delta$, we have $|f_j(x)-f_j(y)|<\frac \varepsilon 2$. Now choose a large enough $N\in \N$ such that for all $j\in \N$ with $j\geq N$, we have
	\[\|f_j-f_\infty\|_{B_R(0)}<\frac{\varepsilon}{2}\]
	and 
	\[\|g_j-g_\infty\|_{K}< \delta.\]
	Then we have 
	\begin{align*}
		|f_j(g_j(x))-f_\infty(g_\infty(x))|&\leq |f_j(g_j(x))-f_j(g_\infty(x))|+|f_j(g_\infty(x))-f_\infty(g_\infty(x))|<\varepsilon
	\end{align*}
	for all $x\in K$ and $j\in \N$ with $j\geq N$. 
\end{proof}

\begin{theorem}\label{polynomialapprox}
	Let $n\in\N$ and $\psi\colon \R\to\R$ be a $C^{n}$-function with $\psi^{(n)}\not\equiv 0$ and let $p\colon \R^m\to \R^r$ be a polynomial of degree at most $n$. 
	Let $\mathcal A=(d_0,\dotsc,d_k,\psi)$ be a neural network architecture with $d_0=m$ input neurons, $d_k=r$ output neurons and activation function $\psi$. 
	Suppose that $\mathcal A$ is sufficiently big in the sense that
	\begin{enumerate}
		\item there is a hidden layer $0<i<k$ of size $d_i\geq r\left(\binom{n+m}{m}-m\right)$,
		\item all previous hidden layers have size $\min(d_0,\dotsc,d_{i-1})\geq d_0$,
		\item all subsequent hidden layers have size $\min(d_{i+1},\dotsc,d_k)\geq d_k$. 
	\end{enumerate}  
	Then $p$ can be approximated uniformly on compact sets by realization functions $\mathcal N_\theta^\mathcal A\colon \R^m\to \R^r$ for $\theta\in \R^d$. 
\end{theorem}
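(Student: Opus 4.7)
The plan has three parts: reduce the deep architecture to a shallow one by approximating the identity in the extra hidden layers, reduce multivariate polynomial approximation to the univariate case via \autoref{lem-polynomial-dimension}, and approximate univariate $n$-th powers by linear combinations of translated $\psi$ via finite differences.

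First I would show that any additional hidden layer (other than the ``magic'' layer $i$) can approximate the identity on vectors whose dimension is bounded by the layer's width. Since $\psi^{(n)} \not\equiv 0$ implies in particular that $\psi$ is non-constant, there exists $t_0 \in \R$ with $\psi'(t_0) \neq 0$. For small $\epsilon > 0$, the scalar map $z \mapsto (\psi(\epsilon z + t_0) - \psi(t_0))/(\epsilon \psi'(t_0))$ converges to $z$ uniformly on compact sets as $\epsilon \to 0$. Using one such neuron per coordinate, hidden layers $1, \ldots, i-1$ (of width $\geq d_0 = m$) can approximate the identity on the input $x \in \R^m$, and hidden layers $i+1, \ldots, k-1$ (of width $\geq d_k = r$) can approximate the identity on the magic layer's $r$-dimensional output. \autoref{arzelaascoli} then guarantees that composing these identity approximations with the magic-layer construction yields a uniform approximation of the entire target map, reducing the problem to the case of a single hidden layer of width $\geq r(\binom{n+m}{m} - m)$.

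Next, applying \autoref{lem-polynomial-dimension} after homogenizing in an auxiliary variable $X_0$, every component $p_j$ of $p$ can be written as
\[p_j(x) = \sum_{\ell} c_{j,\ell}\,(1 + a_\ell \cdot x)^n\]
for finitely many $a_\ell \in \R^m$ and $c_{j,\ell} \in \R$, so it suffices to approximate each basis monomial $x \mapsto (1 + a \cdot x)^n$ uniformly on compact sets by sums of translates of $\psi$. For this, I would pick $t_0$ with $\psi^{(n)}(t_0) \neq 0$ (which exists by $\psi^{(n)} \not\equiv 0$) and invoke the classical finite-difference identity
\[\frac{1}{h^n \psi^{(n)}(t_0)} \sum_{k=0}^{n} (-1)^{n-k} \binom{n}{k} \psi(t_0 + k h s) \xrightarrow{h \to 0} s^n,\]
with convergence uniform for $s$ in any compact set, which follows from Taylor's theorem together with the uniform continuity of $\psi^{(n)}$ on compact sets. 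Substituting $s = 1 + a \cdot x$ gives the required approximation of $(1 + a \cdot x)^n$ by $n+1$ neurons of the prescribed form $\psi(w \cdot x + b)$, and combining these for all basis monomials and forming the appropriate linear combinations in the final affine layer recovers $p$.

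The main obstacle is making the width budget match the tight bound $d_i \geq r(\binom{n+m}{m} - m)$ rather than a larger count. A naive combination of the steps above would consume about $(n+1)\binom{n+m}{m}$ shared hidden neurons, so obtaining the stated bound requires exploiting two structural facts: the hidden weights $W_1, b_1$ are shared across the $r$ outputs, so the finite-difference neurons needed per basis monomial serve all outputs simultaneously, while the $m$ purely linear monomials $x_1, \ldots, x_m$ can be absorbed into the identity-propagation channels of the layers following layer $i$ (and combined into each output by the final affine layer $W_k$) instead of being constructed at layer $i$ itself. Rigorously quantifying this $m$-dimensional saving and controlling the accumulated approximation errors across the $k$ layers via \autoref{arzelaascoli} is the principal bookkeeping effort of the proof.
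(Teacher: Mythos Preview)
Your overall structure---reduce deep to shallow via identity-approximation in the surplus layers, invoke \autoref{lem-polynomial-dimension}, approximate powers with the activation, and glue with \autoref{arzelaascoli}---matches the paper. The genuine gap is the neuron count, and your two proposed fixes do not close it. The finite-difference formula consumes $n+1$ hidden neurons per basis element $(1+a\cdot x)^n$, so even with perfect sharing across the $r$ outputs you end up with $(n+1)\binom{n+m}{m}$ hidden neurons rather than the target $\binom{n+m}{m}-m$ per output; for $r=1$ this misses the stated bound by a factor of roughly $n+1$, which neither ``share across outputs'' nor ``drop the $m$ linear monomials'' can repair. The second fix is in fact unavailable as written: the layers after layer $i$ are only guaranteed width $\geq d_k=r$, so there is no room to route $m$ additional linear channels through them to the final affine layer.

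The paper hits the exact bound by an \emph{induction on $n$} that spends only \emph{one} neuron per top-degree basis element. Writing $p=p_0+p_1$ with $p_0$ homogeneous of degree $n+1$, one applies \autoref{lem-polynomial-dimension} in $m$ variables (not $m+1$) to get $p_0(x)=\sum_i \langle a^{(i)},x\rangle^{n+1}$ with only $\binom{n+m}{m-1}$ terms, and approximates each summand by the \emph{single} neuron $\tfrac{j^{n+1}}{\alpha}\psi\bigl(x_0+\langle a^{(i)},x\rangle/j\bigr)$ where $\alpha=\psi^{(n+1)}(x_0)/(n+1)!\neq 0$. By Taylor's theorem with Lagrange remainder this neuron equals $\langle a^{(i)},x\rangle^{n+1}+o(1)$ \emph{plus} the degree-$\leq n$ Taylor polynomial $\tfrac{j^{n+1}}{\alpha}T_n\psi\bigl(x_0+\langle a^{(i)},x\rangle/j\bigr)$. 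The crucial point is that these (large, $j$-dependent) Taylor corrections together with $p_1$ form, for each fixed $j$, a polynomial of degree $\leq n$, hence by the induction hypothesis approximable with $\binom{n+m}{m}-m$ neurons; a diagonal choice over an exhaustion by compacts then yields convergence. Pascal's identity $\binom{n+m}{m-1}+\binom{n+m}{m}=\binom{n+1+m}{m}$ gives the total $\binom{n+1+m}{m}-m$. The base case $n=1$ handles any affine map $\R^m\to\R$ with a single neuron, which is where the ``$-m$'' in the formula originates---not from bypassing the linear monomials in later layers.
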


\begin{proof}
	We prove the theorem by considering several successively more general cases. 
	The structure of the proof is visualized in Figure \ref{fig-structure}. 
	\begin{figure}[h]
		\begin{center}
			\begin{forest}
				for tree={draw}
				[Proof of \autoref{polynomialapprox}
				[one hidden layer
				[$r{=}1$
				[$n{=}1$
				[$p(x){=}x$]
				[general $p$]
				]
				[$n\to n+1$
				[$p(x){=}x^{n+1}$]
				[general $p$]
				]
				]
				[$r>1$]
				]
				[multiple hidden layers]
				]
			\end{forest}
				\caption{Structure of the special cases in the proof of \autoref{polynomialapprox}.}
			\label{fig-structure}
		\end{center}
	\end{figure}
	
	Throughout the proof, we denote for $j\in \N$ by 
	\[\mathfrak R_{m,j,r}^\psi\coloneqq \{\mathcal N^{(m,j,r,\psi)}_\theta\mid \theta\in \R^{(m+1)j+(j+1)r}\}\subseteq C^n(\R^m,\R^r)\]
	the set of all realization functions of shallow neural networks with $m$ input neurons, $j$ hidden neurons, $r$ output neurons and activation $\psi$. 
	Note that 
	\begin{equation}\label{sum-representable}
		\mathfrak R_{m,i,r}^\psi +\mathfrak R_{m,j,r}^\psi=\mathfrak R_{m,i+j,r}^\psi
	\end{equation}
	for all $i,j\in \N$. We moreover denote by $\overline{\mathfrak R}_{m,j,r}^\psi\subseteq C(\R^m,\R^r)$ the set of all functions $f\colon \R^m\to \R^r$, which can be approximated uniformly on compact subsets by elements of $\mathfrak R_{m,j,r}^\psi$. 
	For $a,x\in \R^m$, we denote by $\langle x,y\rangle = \sum_{i=1}^ma_ix_i$ their inner product. 
	
	\textbf{Case 1:} We have $r=1$ and $\mathcal A=(m,\binom{n+m}{m}-m,1,\psi)$.
	
	We prove Case 1 by induction on $n$. 
	
	We start with the case $n=1$.
	We first consider the case that $p$ is equal to the identity function 
	$\id_\R\colon \R\to \R,\quad \id_\R(x)=x$.
	By the assumptions on $\psi$, there exists $x_{0}\in\R$ with $\alpha\coloneqq\psi'(x_{0})\not =0$. 
	Observe that the functions $(f_j)_{j\in \N}\subseteq \mathfrak R_{1,1,1}^\psi$ defined by
	\[
	f_{j}(x)= \frac {j}{\alpha} (\psi( x/j+x_{0}) -  \psi(x_{0})),\quad x\in \R
	\]
	converge along $j\to \infty$ to $\id_\R$ uniformly on every compact subset $I\subseteq \R$.
	Indeed, using continuity of $\psi'$, we have for every $j\in \N$ that 
	\begin{align*}
		\sup_{x\in I}|f_j(x) - x|
		&=\sup_{x\in I}   \left|\frac{j}{\alpha}\int_{x_0}^{x_0+\frac x j } {\psi'(t)}  dt -x\right|\\
		&=\sup_{x\in I}   \left|\int_{x_0}^{x_0+\frac x j } \left(\frac{j}{\alpha}{\psi'(t)} - j \right)  dt\right|\\
		&\leq \frac{j}{|\alpha|} \sup_{x\in I} \int_{x_0}^{x_0+\frac x j } \left|{\psi'(t)} -\alpha\right| dt\\
		&\leq \underbrace{\frac{1}{|\alpha|} \sup_{x\in I}|x|}_{<\infty} \underbrace{\sup_{t\in [x_0,x_0+\frac x j]}|\psi'(t)-\alpha|}_{\to 0}\\
		&\xrightarrow{j\to \infty}	 0.
	\end{align*}
	This proves that $\id_\R\in \overline{\mathfrak R}_{1,1,1}^\psi$. 
	
	Now let
	\[p\colon \R^m\to \R,\quad p(x)=\langle a,x\rangle  + a_0=a_1x_1+\dotsb+a_mx_m+a_0\]
	be an arbitrary polynomial of degree at most one with $a\in \R^m$ and $a_0\in \R$.
	Choose a sequence $(f_j)_{j\in \N}\subseteq \mathfrak R_{1,1,1}^\psi$ converging to $\id_\R$ uniformly on compact subsets. 
	Since $p$ is continuous and $f_j\xrightarrow{j\to \infty}\id_\R$ uniformly on compact sets, the sequence $(\tilde f_j)_{j\in \N}\subseteq \mathfrak R_{m,1,1}^\psi$ defined by 
	\[\tilde f_j(x)\coloneqq f_j(\langle a,x\rangle + a_0)\]
	converges to $p$ uniformly on compact sets. 
	This proves that $p\in \overline{\mathfrak R}_{m,1,1}^\psi$.
	Noting that $\binom{1+m}{m}-m=1$, this concludes the case $n=1$. 
	
	We now prove Case 1 for $n+1$, assuming that we have proved Case 1 for $n$. 
	We first consider the case that $p$ is given by the $(n+1)$-th power function $\id_\R^{n+1}\colon \R\to \R,\quad \id_\R^{n+1}(x)=x^{n+1}$.  
	By the assumptions on $\psi$, there exists $x_{0}\in\R$ with 
	\begin{equation}\label{eq-def-x0}
		\alpha\coloneqq\frac{\psi^{(n+1)}(x_{0})}{(n+1)!}\not=0.
	\end{equation}
	We denote by 
	\begin{equation}\label{eq-taylor}
		T_n\psi\colon \R\to \R,\quad T_n\psi(x)\coloneqq\sum_{j=0}^n\frac{\psi^{(j)}(x_0)}{j!}(x-x_0)^j
	\end{equation}
	the Taylor approximation of $\psi$ at $x_{0}$ of order $n$. We define functions $(g_j)_{j\in \N}\subseteq C^{n+1}(\R,\R)$ by 
	\begin{equation}\label{eq-def-gj}
	g_{j}(x)=\frac {j^{n+1}}{\alpha} (\psi(x_{0}+x/j)-T_n\psi(x_{0}+x/j)),\quad x\in \R
	\end{equation}
	and claim that 
	\begin{equation}\label{eq-gj}
		\sup_{x\in I}|g_j(x)-x^{n+1}|\xrightarrow{j\to \infty}0
	\end{equation}
	for every compact subset $I\subseteq \R$. 
	
	Indeed, by the Lagrange form of the remainder in Taylor's formula, for every $j\in \N$ there is a $\xi_j\in [x_0,x_0+x/j]$ satisfying 
	\begin{align*}
		\sup_{x\in I}|g_j(x)-x^{n+1}|&= \sup_{x\in I}\left|\frac{j^{n+1}}{\alpha}\frac{\psi^{(n+1)}(\xi_j)}{(n+1)!}\left(\frac x j\right)^{n+1} - x^{n+1}  \right|\\
		&=\sup_{x\in I}\left|\left(\frac{\psi^{(n+1)}(\xi_j)}{\psi^{(n+1)}(x_0)}-1\right)x^{n+1}\right|\\
		&\xrightarrow{j\to \infty}0,
	\end{align*}
	where at the last step we have used continuity of $\psi^{(n+1)}$ and compactness of $I$. 
	
	Now 
	fix an arbitrary polynomial $p\colon \R^m\to \R$ of degree at most $n+1$. 
	We write $p=p_0+p_1$ where $p_0$ is a homogeneous polynomial of degree $n+1$ and $p_1$ is a polynomial of degree at most $n$. 
	By \autoref{lem-polynomial-dimension} (applied to $n\gets n+1$ and $m \gets m-1$), we can find $a^{(i)}\in \R^m$ for $i=1,\dotsc,\binom{n+m}{m-1}$ such that 
	\begin{equation}\label{eq-decompose-p0}
		p_0(x)=\sum_{i=1}^{\binom{n+m}{m-1}}(\langle a^{(i)},x\rangle )^{n+1},\quad \forall x\in \R^m.
	\end{equation}
	
	Fix an increasing sequence $(K_j)_{j\in \N}$ of compact subsets of $\R^m$ such that $\R^m=\bigcup_{j\in \N}K_j$.
	Since $\psi$ satisfies the assumptions of the theorem for $n+1$, then it does for $n$. 
	In particular, for every $j\in \N$, we can find a function 
	$q_{j}\in \mathfrak R_{m,\binom{n+m}{m}-m,1}^{\psi}$ such that 
	\begin{equation} \label{eq-qj}
		\sup_{x\in K_j}\left\| q_{j}(x)- p_1(x) + \sum_{i=1}^{\binom{n+m}{m-1}}\frac{j^{n+1}}{\alpha}T_n\psi(\langle a^{(i)},x\rangle)\right\|<\frac 1 j,
	\end{equation}
	where $T^n\psi$ is defined in \eqref{eq-taylor}.
	
	We now define a sequence of functions $(f_j)_{j\in \N}\subseteq C^{n+1}(\R^m,\R)$ by 
	\[f_j(x)= q_j(x)+\sum_{i=1}^{\binom{n+m}{m-1}}\frac{j^{n+1}}{\alpha}\psi\left(x_0 +\frac 1 j (\langle a^{(i)},x\rangle)\right),\]
	where $\alpha,x_0\in\R$ are defined as in \eqref{eq-def-x0}.
	
	Note that by definition, the first summand of $f_j$ defines an element of $\mathfrak R_{m,\binom{n+m}{m}-m,1}^{\psi}$ while the second summand defines an element of $\mathfrak R_{m,\binom{n+m}{m-1},1}^{\psi}$. Thus, by \eqref{sum-representable}, %
	$f_j$ defines an element of $\mathfrak R_{m,\binom{n+1+m}{m}-m,1}^{\psi}$.
	
	Now let $K\subseteq \R^m$ be a compact subset and $j\in \N$. Using \eqref{eq-def-gj} and \eqref{eq-decompose-p0} at the second step, and the combination of \eqref{eq-gj} and \eqref{eq-qj} at the last step, we get
	\begin{align*}
		&\sup_{x\in K}|f_j(x)-p(x)|\\
		=&\sup_{x\in K}\left|q_j(x)+\sum_{i=1}^{\binom{n+m}{m-1}}\frac{j^{n+1}}{\alpha}\psi\left(x_0 +\frac 1 j (\langle a^{(i)},x\rangle)\right) -p_0(x)-p_1(x)\right|\\
		=&\sup_{x\in K}\left|q_{j}(x)- p_1(x) + \sum_{i=1}^{\binom{n+m}{m-1}}\frac{j^{n+1}}{\alpha}T^n\psi(\langle a^{(i)},x\rangle) +\sum_{i=1}^{\binom{n+m}{m-1}}\left(g_j(\langle a^{(i)},x\rangle) - (\langle a^{(i)},x\rangle )^{n+1}\right)\right|\\
		\leq&  \sup_{x\in K}\left|q_{j}(x)- p_1(x) + \sum_{i=1}^{\binom{n+m}{m-1}}\frac{j^{n+1}}{\alpha}T^n\psi(\langle a^{(i)},x\rangle)\right|   + \sup_{x\in K}\sum_{i=1}^{\binom{n+m}{m-1}}\left|g_j(\langle a^{(i)},x\rangle) - (\langle a^{(i)},x\rangle )^{n+1}\right|\\
		&\xrightarrow{j\to \infty}0.
	\end{align*}

	This proves that $p\in \overline{\mathfrak R}_{m,\binom{n+1-m}{m}-m,1}^\psi$ and finishes the proof of Case 1.

	\textbf{Case 2:} We have $\mathcal A=(m,r\left(\binom{n+m}{m}-m\right),r,\psi)$.
	This case follows directly from Case $1$ by applying Case 1 to each of the components $p_1,\dotsc,p_r$ of $p=(p_1,\dotsc,p_r)\colon \R^m\to \R^r$ and stacking the neural network architectures used in the approximation on top of each other. 
	
	\textbf{Case 3:} We have a general architecture $\mathcal A=(m,d_1,\dotsc,d_{k-1},r,\psi)$ as in the statement of the theorem.
	
	We fix $0<i<k$ such that 
	\begin{enumerate}
		\item $d_i\geq r\left(\binom{n+m}{m}-m\right)$,
		\item $\min(d_0,\dotsc,d_{i-1})\geq d_0$,
		\item $\min(d_{i+1},\dotsc,d_k)\geq d_k$. 
	\end{enumerate}  
	By Case 2, there is a sequence $(f^{(i)}_j)_{j\in \N}\subseteq \mathfrak R_{m, r\left(\binom{n+m}{m}-m\right),r}$ converging to $p$ uniformly on compact subsets of $\R^m$. 
	By a componentwise application of Case 1, for any $0\leq l<i$, there is a sequence $(f^{(l)}_j)_{j\in \N}\subseteq \mathfrak R_{m,m,m}^\psi$ converging to $\mathrm{id}_{\R^m}$ uniformly on compact sets. 
	Similarly, for any $i<l\leq k$, there is a sequence $(f^{(l)}_j)_{j\in \N}\subseteq \mathfrak R_{r,r,r}^\psi$ converging to $\mathrm{id}_{\R^r}$ uniformly on compact sets.
	By \autoref{arzelaascoli}, we have 
	\[f^{(k)}_j \circ \dotsb \circ f^{(0)}_j\xrightarrow{j\to\infty}p\]
	uniformly on compact subsets of $\R^m$. Moreover, by the assumptions on $d_0,\dotsc,d_k$ we have 
		\[f^{(k)}_j \circ \dotsb \circ f^{(0)}_j\in \mathfrak R_{r,r,r}^\psi \circ \dotsb \circ \mathfrak R_{m,r\left(\binom{n+m}{m}-m\right),r}^\psi \circ \dotsb\circ \mathfrak R_{m,m,m}^\psi\subseteq \{\mathcal N_\theta^\mathcal A\mid \theta\in \R^{d(\mathcal A)}\}, \]
	see \autoref{networkfigure} for an illustration. 
	\begin{figure}[h]
		\includegraphics[width= \linewidth]{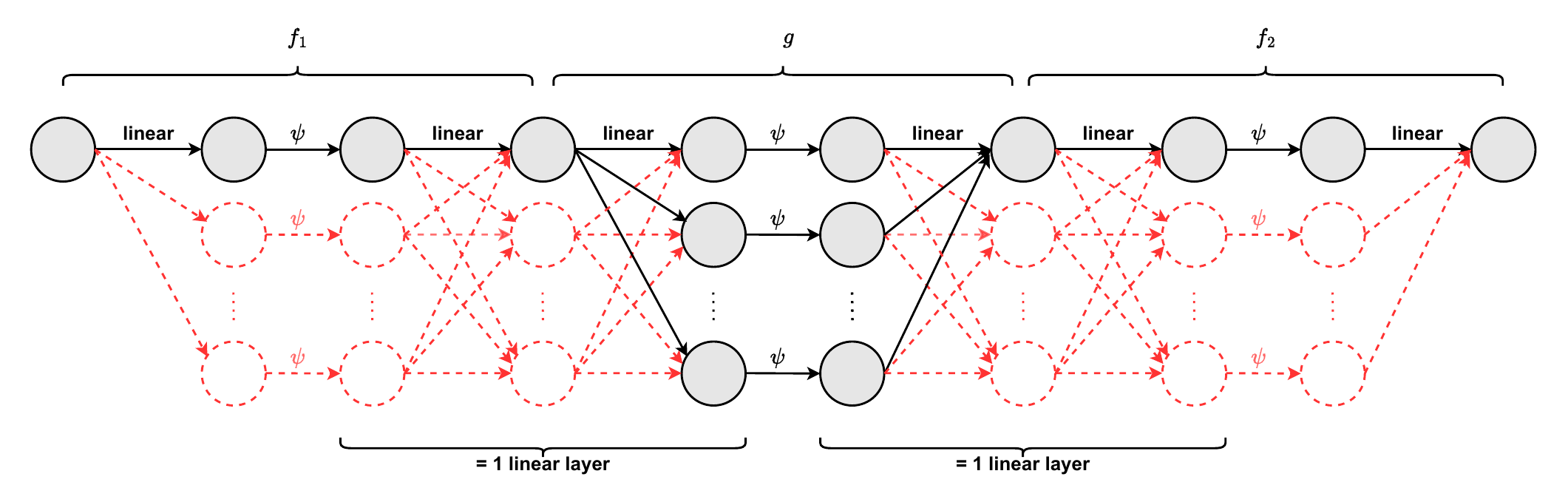}
		\caption{An illustration how a neural network with 3 hidden layers can realize functions of the form $f_2\circ g \circ f_1$ where $g$ is the realization of a shallow neural network with $n$ neurons and $f_i$ are realizations of shallow neural networks with on hidden neuron. Dotted lines correspond to weights that are set to zero.}
		\label{networkfigure}
	\end{figure}
	This finishes the proof of the theorem.
\end{proof}

\begin{proof}[Proof of \autoref{thmC}] \label{prf-thmC}
	\autoref{thmC} follows immediately from \autoref{polynomialapprox} and the fact that $\mu$ is a probability measure since uniform convergence on compact sets implies $L^1$-convergence with respect to probability measures. 
\end{proof}

\begin{rem}\label{rem-sharp}
	The bounds on the required number of neurons in \autoref{thmC} are not sharp. 
	In many cases, one can approximate polynomials with far fewer neurons. 
	This can be done by following the steps in the proof of \autoref{polynomialapprox} in detail and depends on several factors such as 
	\begin{enumerate}
		\item the number of nonzero homogeneous parts of the polynomial,
		\item how many nonzero coefficients are needed to express the homogenous polynomials in the basis from \autoref{lem-dimension-argument}, 
		\item the zeros of the Taylor coefficients of the activation function.
	\end{enumerate}
	Moreover, the number of neurons can be reduced drastically if one allows for deep neural networks and rewrites the polynomial as a composition of easier polynomials. 
	The polynomial $p(x,y)=x^7y^9$ for example can be rewritten as the composition of the functions $(x,y)\mapsto (x^7,y^9)$ and $(x,y)\mapsto xy$ and can therefore be approximated by realization functions of the architecture $\mathcal A=(2,16,4,1,\psi)$ for any SAD activation function $\psi$. 
\end{rem}

	\section{Experimental Details}\label{appendix-F}

In this section, we detail the experimental setup. All experiments were conducted on a single Tesla V100 GPU, with each training run requiring between 10 and 20 minutes. Overall, the computational demands of our study remain modest.

\subsection*{Polynomial target function}
We train our neural networks to approxomate the polynomial target function $p\colon [a,b]^{d_0} \to \R$ where $[a,b]^{d_0}=[-1,1]^{d_0}$.
The target functions differ based on the input dimension  $d_0$:
\begin{enumerate}
		\item for $d_0=1$,
		\begin{equation*}
			p(x_1)=x_1^{10} -2 x_1^8 + 2x_1^5    +3x_1^3-2 x_1^2 + 5,
		\end{equation*}
		\item for $d_0=2$,
		\begin{equation*}
			p(x_1,x_2)=x_2^5 - x_1^3 x_2^2 -4 x_1^2 x_2 + 3 x_1^3 -x_2^2 +x_1 +2,
		\end{equation*}
		and
		\item  for $d_0=4$,
		\begin{equation*}
			p(x_1,x_2,x_3,x_4)=x_1^6 x_4^5 + x_2^6 - x_1^3 x_2^2 x_3 + x_4^2 -4 x_3^4 x_2^4 + 3 x_4^3 x_2^3-x_3^2 x_1 + x_3 +3.
		\end{equation*}
\end{enumerate}
In the one-dimensional case, the neural network architecture consists of three hidden layers with widths $( d_1,d_2,d_3) = (10,20,10)$. For the higher-dimensional settings we increase the dimensions to $( d_1,d_2,d_3) = (20,40,20)$.\footnote{The reader might notice that the architecture in the four dimensional input case does not satisfy the crude bounds of \ref{item-hidden-neurons} from \autoref{thmC}. However, as pointed out in \autoref{rem-sharp}, these are only worst case bounds. In specific examples like these, one can approximate the given polynomials with significantly smaller architectures.}
For gradient descent, we use the full training dataset of 10,000 randomly sampled points at each step. For Adam, batches of 100 samples are drawn from the same dataset.
We use a learning rate of 0.001 for gradient descent and 0.005 for Adam.
Note that the plotted loss corresponds to the training loss. 
Our theoretical results, indeed, show that the norm of the neural network parameters diverges while the training loss converges to zero. 
To reduce noise, we plot the exponential moving average of the training loss with smoothing factor $\alpha=0.95$, averaged over $20$ independent random initializations.
We also construct a test set as a uniform grid over the domain. %
The test loss exhibits a similar qualitative behavior.
We include the plots of the test losses in the GitHub repository \githublink.

\begin{figure}[h]
	\centering
	\includegraphics[width=\linewidth]{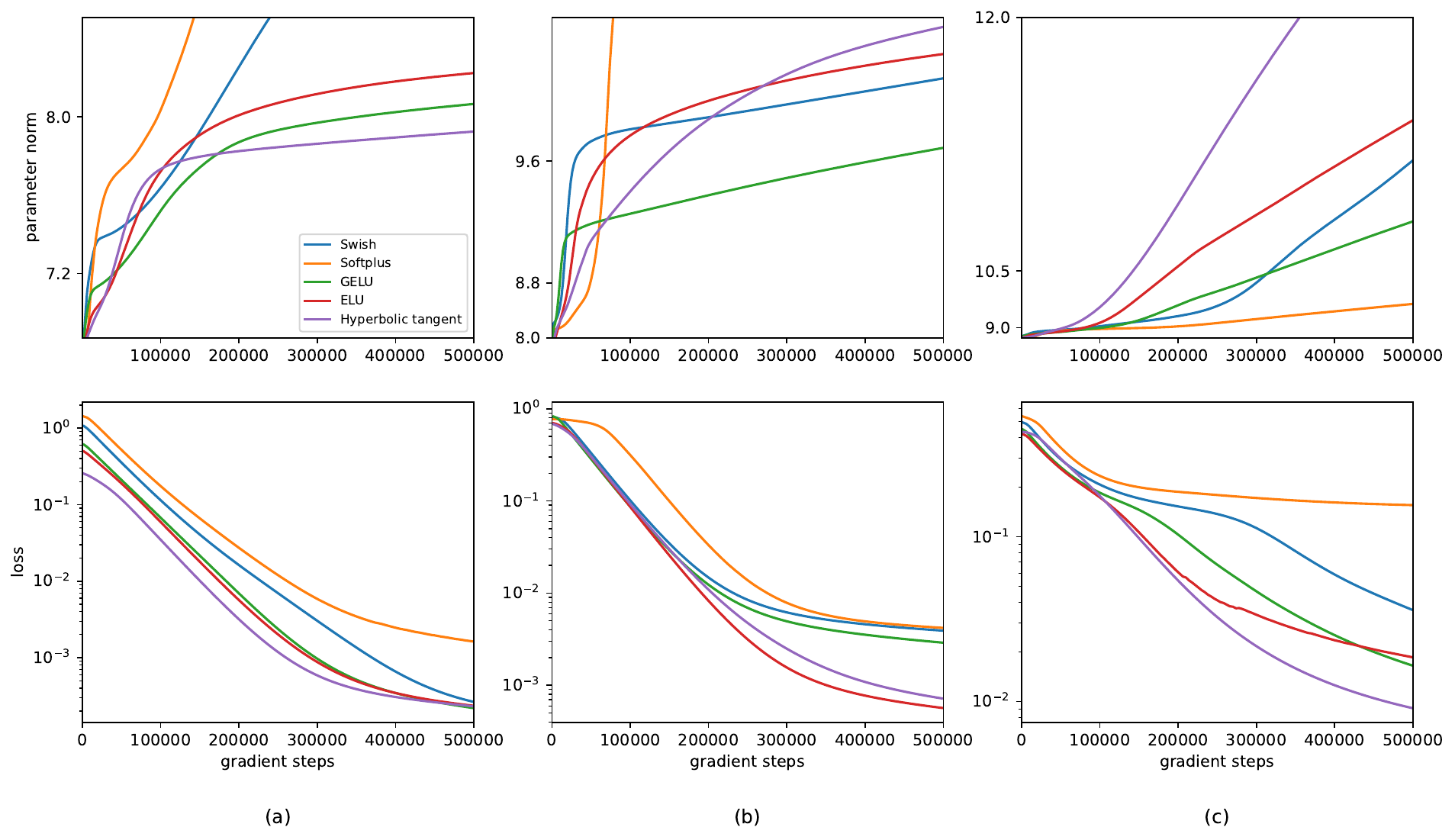}
	\caption{Approximation of polynomial target functions using different activation functions and GD algorithm. From left to right: 1-dimensional input case, 2-dimensional input case,  4-dimensional input case. The $y$-axis is rescaled exponentially in order to illustrate the logarithmic growth of the parameter norms.}
	\label{fig:exponential-rescaling}
\end{figure}

\subsection*{Real world learning task}
To solve the PDE problems we use the Deep Kolmogorov method.
This approach approximates the solution using a neural network, leveraging the Feynman–Kac representation of the PDE solution. Specifically, the neural network is trained to match the expected terminal value of the underlying stochastic differential equation (SDE), estimated via Monte Carlo sampling. 
The method provides a generalizable solution across input space, enabling instant evaluations of the PDE once the network is trained. This is particularly advantageous in high-dimensional settings, where traditional solvers suffer the curse of dimensionality.
We refer to \cite{Beck2021} for a comprehensive description of the method. 

In the case of  the Heat PDE we consider a neural network with architecture $ (d_0, d_1,d_2,d_3, d_4) = (10,50,50,50,1)$  while for the Black-Scholes PDE $(d_0, d_1,d_2,d_3, d_4) =(10, 200, 300, 200, 1)$.
For the latter, we set the interest rate $r= 0.05$, the cost of carry $c=0.01$, the strike price $K=100$ and define the volatility vector as $\sigma=(0.1000, 0.1444, 0.1889, 0.2333, 0.2778, 0.3222, 0.3667, 0.4111, 0.4556, 0.5000)$.
The loss plots in \autoref{fig:real} report the relative mean squared error between the neural network prediction and the reference solutions. For the Heat PDE, the solution is available in closed form, while for the Black–Scholes PDE we use a high-precision Monte Carlo approximation  by averaging over $1000$ independents rounds, each consisting of $1024$ sample paths.
Importantly, the reference solutions are not used during training. At each training step, a single Monte Carlo sample is employed as the target solution. We train the neural networks using the Adam optimizer with a learning rate of 0.005.

For the MNIST classification task, we use a neural network with architecture $(d_0, d_1,d_2,d_3) = (784, 256, 256, 10)$ trained with a standard cross-entropy loss.
Optimization is performed with the Adam optimizer and a learning rate of $0.001$.
As in the previous experiments, we report the training loss in the figure. Training is stopped after $30000$ steps to prevent overfitting, resulting in a final test accuracy of $97.93\%$.

In both the MNIST and PDE examples, we smooth the plots by plotting an exponential moving average of the loss with smoothing factor $\alpha=0.95$, averaged over $5$ independent random initializations.

\end{document}